\DeclareMathOperator*{\argmax}{arg\,max}
\theoremstyle{definition}
\newtheorem{definition}{Definition}
\newtheorem{theorem}{Theorem}
\newtheorem{corollary}[theorem]{Corollary}
\newtheorem{remark}{Remark}
\setlist{nolistsep}
\newcommand\RR{\mathbb{R}}
\title{A Unified Approach for Learning the Parameters of Sum-Product Networks}
\author{
Han Zhao \\
Machine Learning Dept.\\
Carnegie Mellon University\\
\texttt{han.zhao@cs.cmu.edu} \\
\And
Pascal Poupart\\
School of Computer Science\\
University of Waterloo\\
\texttt{ppoupart@uwaterloo.ca}
\And
Geoff Gordon \\
Machine Learning Dept.\\
Carnegie Mellon University\\
\texttt{ggordon@cs.cmu.edu} \\
}
\begin{document}

\maketitle

\begin{abstract}
We present a unified approach for learning the parameters of Sum-Product networks (SPNs). We prove that any complete and decomposable SPN is equivalent to a mixture of trees where each tree corresponds to a product of univariate distributions. Based on the mixture model perspective, we characterize the objective function when learning SPNs based on the maximum likelihood estimation (MLE) principle and show that the optimization problem can be formulated as a signomial program. We construct two parameter learning algorithms for SPNs by using sequential monomial approximations (SMA) and the concave-convex procedure (CCCP), respectively. The two proposed methods naturally admit multiplicative updates, hence effectively avoiding the projection operation. With the help of the unified framework, we also show that, in the case of SPNs, CCCP leads to the same algorithm as Expectation Maximization (EM) despite the fact that they are different in general. 
\end{abstract}

\normalem
\section{Introduction}
Sum-product networks (SPNs) are new deep graphical model architectures that admit exact probabilistic inference in linear time in the size of the network~\citep{poon2011sum}. Similar to traditional graphical models, there are two main problems when learning SPNs: structure learning and parameter learning. 
Parameter learning is interesting even if we know the ground truth structure ahead of time; structure learning depends on parameter learning , so better parameter learning can often lead to better structure learning. \citet{poon2011sum} and \citet{gens2012discriminative} proposed both generative and discriminative learning algorithms for parameters in SPNs. At a high level, these approaches view SPNs as deep architectures and apply projected gradient descent (PGD) to optimize the data log-likelihood. There are several drawbacks associated with PGD\@. For example, the projection step in PGD hurts the convergence of the algorithm and it will often lead to solutions on the boundary of the feasible region. Also, PGD contains an additional arbitrary parameter, the projection margin, which can be hard to set well in practice. In~\citep{poon2011sum,gens2012discriminative}, the authors also mentioned the possibility of applying EM algorithms to train SPNs by viewing sum nodes in SPNs as hidden variables. They presented an EM update formula without details. However, the update formula for EM given in~\citep{poon2011sum,gens2012discriminative} is incorrect, as first pointed out and corrected by~\cite{peharz2015foundations}.

In this paper we take a different perspective and present a unified framework, which treats~\cite{poon2011sum,gens2012discriminative} as special cases, for learning the parameters of SPNs. We prove that any complete and decomposable SPN is equivalent to a mixture of trees where each tree corresponds to a product of univariate distributions. Based on the mixture model perspective, we can precisely characterize the functional form of the objective function based on the network structure. We show that the optimization problem associated with learning the parameters of SPNs based on the MLE principle can be formulated as a signomial program (SP), where both PGD and exponentiated gradient (EG) can be viewed as first order approximations of the signomial program after suitable transformations of the objective function. We also show that the signomial program formulation can be equivalently transformed into a difference of convex functions (DCP) formulation, where the objective function of the program can be naturally expressed as a difference of two convex functions. The DCP formulation allows us to develop two efficient optimization algorithms for learning the parameters of SPNs based on sequential monomial approximations (SMA) and the concave-convex procedure (CCCP), respectively. Both proposed approaches naturally admit multiplicative updates, hence effectively deal with the positivity constraints of the optimization. Furthermore, under our unified framework, we also show that CCCP leads to the same algorithm as EM despite that these two approaches are different from each other in general. Although we mainly focus on MLE based parameter learning, the mixture model interpretation of SPN also helps to develop a Bayesian learning method for SPNs~\cite{zhaocollapsed}.

PGD, EG, SMA and CCCP can all be viewed as different levels of convex relaxation of the original SP\@. Hence the framework also provides an intuitive way to compare all four approaches. We conduct extensive experiments on 20 benchmark data sets to compare the empirical performance of PGD, EG, SMA and CCCP\@. Experimental results validate our theoretical analysis that CCCP is the best among all 4 approaches, showing that it converges consistently faster and with more stability than the other three methods. Furthermore, we use CCCP to boost the performance of LearnSPN~\citep{gens2013learning}, showing that it can achieve results comparable to state-of-the-art structure learning algorithms using SPNs with much smaller network sizes. 

\section{Background}
\subsection{Sum-Product Networks}
To simplify the discussion of the main idea of our unified framework, we focus our attention on SPNs over Boolean random variables. However, the framework presented here is general and can be easily extended to other discrete and continuous random variables. We first define the notion of \emph{network polynomial}. We use $\mathbb{I}_x$ to denote an indicator variable that returns 1 when $X = x$ and 0 otherwise. 
\begin{definition}[Network Polynomial~\citep{darwiche2003differential}]
\label{def:networkpolynomial}
Let $f(\cdot)\geq 0$ be an unnormalized probability distribution over a Boolean random vector $\mathbf{X}_{1:N}$. The network polynomial of $f(\cdot)$ is a multilinear function $\sum_{\mathbf{x}}f(\mathbf{x})\prod_{n=1}^N\mathbb{I}_{\mathbf{x}_n}$ of indicator variables, where the summation is over all possible instantiations of the Boolean random vector $\mathbf{X}_{1:N}$.
\end{definition}
A Sum-Product Network (SPN) over Boolean variables $\mathbf{X}_{1:N}$ is a rooted DAG that computes the network polynomial over $\mathbf{X}_{1:N}$. The leaves are univariate indicators of Boolean variables and internal nodes are either sum or product. Each sum node computes a weighted sum of its children and each product node computes the product of its children. The \emph{scope} of a node in an SPN is defined as the set of variables that have indicators among the node's descendants. For any node $v$ in an SPN, if $v$ is a terminal node, say, an indicator variable over $X$, then $\text{scope}(v)=\{X\}$, else $\text{scope}(v)=\bigcup_{\tilde{v}\in Ch(v)}\text{scope}(\tilde{v})$. An SPN is \emph{complete} iff each sum node has children with the same scope. An SPN is decomposable iff for every product node $v$, scope($v_i$) $\bigcap$ scope($v_j$) $=\varnothing$ where $v_i, v_j\in Ch(v), i\neq j$. The scope of the root node is $\{X_1, \ldots, X_N\}$.

In this paper, we focus on complete and decomposable SPNs. For a complete and decomposable SPN $\mathcal{S}$, each node $v$ in $\mathcal{S}$ defines a network polynomial $f_v(\cdot)$ which corresponds to the sub-SPN (subgraph) rooted at $v$. The network polynomial of $\mathcal{S}$, denoted by $f_{\mathcal{S}}$, is the network polynomial defined by the root of $\mathcal{S}$, which can be computed recursively from its children. The probability distribution induced by an SPN $\mathcal{S}$ is defined as $\Pr_{\mathcal{S}}(\mathbf{x})\triangleq \frac{f_{\mathcal{S}}(\mathbf{x})}{\sum_{\mathbf{x}}f_{\mathcal{S}}(\mathbf{x})}$. The normalization constant $\sum_{\mathbf{x}}f_{\mathcal{S}}(\mathbf{x})$ can be computed in $O(|\mathcal{S}|)$ in SPNs by setting the values of all the leaf nodes to be 1, i.e., $\sum_{\mathbf{x}}f_{\mathcal{S}}(\mathbf{x}) = f_{\mathcal{S}}(\mathbf{1})$~\cite{poon2011sum}. This leads to efficient joint/marginal/conditional inference in SPNs.

\subsection{Signomial Programming (SP)}
Before introducing SP, we first introduce geometric programming (GP), which is a strict subclass of SP\@. A \emph{monomial} is defined as a function $h:\RR^n_{++}\mapsto\RR$:
$h(\mathbf{x}) = dx_1^{a_1}x_2^{a_2}\cdots x_n^{a_n}$,
where the domain is restricted to be the positive orthant ($\RR^n_{++}$), the coefficient $d$ is positive and the exponents $a_i\in\RR, \forall i$. A \emph{posynomial} is a sum of monomials: $g(\mathbf{x}) = \sum_{k=1}^K d_k x_1^{a_{1k}}x_2^{a_{2k}}\cdots x_n^{a_{nk}}$. One of the key properties of posynomials is positivity, which allows us to transform any posynomial into the log domain. A GP in standard form is defined to be an optimization problem where both the objective function and the inequality constraints are posynomials and the equality constraints are monomials. There is also an implicit constraint that $\mathbf{x}\in\RR^n_{++}$. 

A GP in its standard form is not a convex program since posynomials are not convex functions in general.
However, we can effectively transform it into a convex problem by using the \emph{logarithmic transformation trick} on $\mathbf{x}$, the multiplicative coefficients of each monomial and also each objective/constraint function~\citep{chiang2005geometric,boyd2007tutorial}. 

An SP has the same form as GP except that the multiplicative constant $d$ inside each monomial is not restricted to be positive, i.e., $d$ can take any real value. Although the difference seems to be small, there is a huge difference between GP and SP from the computational perspective. The negative multiplicative constant in monomials invalidates the logarithmic transformation trick frequently used in GP. As a result, SPs cannot be reduced to convex programs and are believed to be hard to solve in general~\citep{boyd2007tutorial}. 

\section{Unified Approach for Learning}
In this section we will show that the parameter learning problem of SPNs based on the MLE principle can be formulated as an SP. 
We will use a sequence of optimal monomial approximations combined with backtracking line search and the concave-convex procedure to tackle the SP\@. Due to space constraints, we refer interested readers to the supplementary material for all the proof details.

\subsection{Sum-Product Networks as a Mixture of Trees}
We introduce the notion of \emph{induced trees} from SPNs and use it to show that every complete and decomposable SPN can be interpreted as a mixture of induced trees, where each induced tree corresponds to a product of univariate distributions. From this perspective, an SPN can be understood as a huge mixture model where the effective number of components in the mixture is determined by its network structure. The method we describe here is not the first method for interpreting an SPN (or the related arithmetic circuit) as a mixture distribution~\cite{zhao2015spnbn,dennis2015greedy,chanrobustness}; but, the new method can result in an exponentially smaller mixture, see the end of this section for more details.

\begin{definition}[Induced SPN]
\label{def:treespn}
Given a complete and decomposable SPN $\mathcal{S}$ over $X_{1:N}$, let $\mathcal{T} = (\mathcal{T}_V, \mathcal{T}_E)$ be a subgraph of $\mathcal{S}$. $\mathcal{T}$ is called an \emph{induced SPN} from $\mathcal{S}$ if 
\begin{enumerate}
	\item 	$Root(\mathcal{S})\in\mathcal{T}_V$.
	\item 	If $v\in\mathcal{T}_V$ is a sum node, then exactly one child of $v$ in $\mathcal{S}$ is in $\mathcal{T}_V$, and the corresponding edge is in $\mathcal{T}_E$.
	\item 	If $v\in\mathcal{T}_V$ is a product node, then all the children of $v$ in $\mathcal{S}$ are in $\mathcal{T}_V$, and the corresponding edges are in $\mathcal{T}_E$.
\end{enumerate}
\end{definition}

\begin{restatable}{theorem}{treespn}
\label{thm:treespn}
If $\mathcal{T}$ is an induced SPN from a complete and decomposable SPN $\mathcal{S}$, then $\mathcal{T}$ is a tree that is complete and decomposable. 
\end{restatable}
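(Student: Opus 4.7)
The plan is to establish (a) $\mathcal{T}$ is connected, (b) every non-root node of $\mathcal{T}$ has a unique parent in $\mathcal{T}$, and (c) $\mathcal{T}$ inherits completeness and decomposability. Parts (a) and (c) should be almost immediate from Definition 2; the real content lies in (b).

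For connectedness, I would argue by induction on the depth of a node $v\in\mathcal{T}_V$ (measured in $\mathcal{T}$) that there is a root-to-$v$ path in $\mathcal{T}$. The root is in $\mathcal{T}_V$ by condition 1. For any other $v\in\mathcal{T}_V$, conditions 2 and 3 ensure that whenever a parent $p$ of $v$ in $\mathcal{S}$ lies in $\mathcal{T}_V$ and the edge $(p,v)$ is in $\mathcal{T}_E$, this edge is part of a valid selection, so the inclusion of $v$ in $\mathcal{T}_V$ is witnessed by some parent $p\in\mathcal{T}_V$ with $(p,v)\in\mathcal{T}_E$; chaining these witnesses back to the root yields the required path.

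The heart of the proof is (b): I would suppose for contradiction that some $v\in\mathcal{T}_V$ has two distinct parents $p_1,p_2\in\mathcal{T}_V$ with $(p_1,v),(p_2,v)\in\mathcal{T}_E$. Using (a), pick root-to-$v$ paths $P_1$ (through $p_1$) and $P_2$ (through $p_2$) in $\mathcal{T}$, and let $u$ be the last node at which $P_1$ and $P_2$ still agree. Let $c_1\neq c_2$ be the children of $u$ along $P_1$ and $P_2$ respectively; both edges $(u,c_1),(u,c_2)$ lie in $\mathcal{T}_E$, and $v$ is a descendant of both $c_1$ and $c_2$ in $\mathcal{S}$. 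If $u$ is a sum node, condition 2 forces at most one such child, contradicting $c_1\neq c_2$. If $u$ is a product node, decomposability of $\mathcal{S}$ gives $\mathrm{scope}(c_1)\cap\mathrm{scope}(c_2)=\varnothing$; but $\mathrm{scope}(v)\subseteq\mathrm{scope}(c_i)$ for $i=1,2$, forcing $\mathrm{scope}(v)=\varnothing$, which is impossible since every node in an SPN has at least one indicator descendant. Together with (a), this makes $\mathcal{T}$ a tree.

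Finally, (c) is essentially bookkeeping. In $\mathcal{T}$ each sum node has exactly one child (condition 2), so the completeness requirement that all children of a sum node have the same scope holds vacuously. Each product node of $\mathcal{T}$ retains all its children from $\mathcal{S}$ (condition 3), so the pairwise-disjoint-scope condition is directly inherited from decomposability of $\mathcal{S}$. The main obstacle is the argument in (b): formalizing the "last agreement node" on two DAG paths and then squeezing the contradiction out of decomposability at that node is the only place where the combinatorial structure of SPNs really has to do work.
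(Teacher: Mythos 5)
Your proposal is correct and follows essentially the same route as the paper's proof: the central step in both is to take two root-to-$v$ paths, locate the last node where they agree, rule out a sum node there via the one-child condition, and derive $\varnothing\neq\mathrm{scope}(v)\subseteq\mathrm{scope}(c_1)\cap\mathrm{scope}(c_2)$ to contradict decomposability at a product node, with completeness holding vacuously and decomposability inherited. The only difference is that you make the connectedness of $\mathcal{T}$ explicit, which the paper leaves implicit.
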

As a result of Thm.~\ref{thm:treespn}, we will use the terms \emph{induced SPNs} and \emph{induced trees} interchangeably. With some abuse of notation, we use $\mathcal{T}(\mathbf{x})$ to mean the value of the network polynomial of $\mathcal{T}$ with input vector $\mathbf{x}$.
\begin{restatable}{theorem}{sp}
\label{thm:sp}
If $\mathcal{T}$ is an induced tree from $\mathcal{S}$ over $X_{1:N}$, then $\mathcal{T}(\mathbf{x}) = \prod_{(v_i, v_j)\in\mathcal{T}_E}w_{ij}\prod_{n=1}^N\mathbb{I}_{x_n}$, where $w_{ij}$ is the edge weight of $(v_i, v_j)$ if $v_i$ is a sum node and $w_{ij} = 1$ if $v_i$ is a product node.
\end{restatable}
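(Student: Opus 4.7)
The plan is to prove this by structural induction on the induced tree $\mathcal{T}$, working upward from the leaves. Since Theorem~\ref{thm:treespn} already guarantees that $\mathcal{T}$ is a tree that is complete and decomposable, the induction on subtrees is well-posed, and for each subtree rooted at a node $v$ I will show the stronger inductive statement: the value of the subtree is $\bigl(\prod_{(v_i,v_j) \in \mathcal{T}_E(v)} w_{ij}\bigr)\prod_{n \in \mathrm{scope}(v)} \mathbb{I}_{x_n}$, where $\mathcal{T}_E(v)$ is the edge set of the subtree rooted at $v$. Strengthening the statement to be indexed by $\mathrm{scope}(v)$ rather than $\{1,\ldots,N\}$ is what makes the bookkeeping work, and the theorem is recovered at the root since the root's scope is $\{X_1,\ldots,X_N\}$.

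For the base case, a leaf of $\mathcal{T}$ is a leaf of $\mathcal{S}$, hence a univariate indicator $\mathbb{I}_{X_n=x_n}$ for some $n$. The edge set is empty, so the edge-weight product is the empty product $1$, and $\mathrm{scope}(v) = \{X_n\}$, giving $\mathbb{I}_{x_n}$, which matches the value of the leaf. For the inductive step at a sum node $v$, Definition~\ref{def:treespn} says exactly one child $c$ of $v$ lies in $\mathcal{T}_V$, with the edge $(v,c) \in \mathcal{T}_E$; by the semantics of sum nodes the subtree value is $w_{vc}\,\mathcal{T}_c(\mathbf{x})$, and completeness of $\mathcal{S}$ ensures $\mathrm{scope}(v) = \mathrm{scope}(c)$, so applying the inductive hypothesis to $\mathcal{T}_c$ and multiplying by $w_{vc}$ yields exactly the desired form. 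For the inductive step at a product node $v$, all children of $v$ are in $\mathcal{T}_V$ with the corresponding edges in $\mathcal{T}_E$ and $w_{vc} = 1$; the subtree value factorizes as $\prod_{c \in Ch(v)} \mathcal{T}_c(\mathbf{x})$, decomposability guarantees the scopes of the children are pairwise disjoint and union to $\mathrm{scope}(v)$, so the indicator products from distinct children share no variables and together cover $\mathrm{scope}(v)$ exactly once; invoking the inductive hypothesis on each $\mathcal{T}_c$ and collecting terms gives the claimed product over $\mathcal{T}_E(v)$ and indicators over $\mathrm{scope}(v)$.

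There is no real obstacle, only a bookkeeping subtlety: one must be careful to argue that each variable in $\{X_1,\ldots,X_N\}$ is represented by exactly one indicator leaf of $\mathcal{T}$, with no duplicates and no omissions, so that the final product genuinely reads $\prod_{n=1}^N \mathbb{I}_{x_n}$. Completeness is what transmits the scope across sum nodes (preserving it), while decomposability is what makes the scope of a product node split cleanly across its children (no overlap, full coverage). The strengthened inductive invariant threads these two properties together, and the theorem drops out by evaluating at the root, where $\mathrm{scope}(\mathrm{Root}(\mathcal{S})) = \{X_1,\ldots,X_N\}$.
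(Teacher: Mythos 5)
Your proof is correct, but it is organized differently from the paper's. The paper argues globally about the multiset of leaves of $\mathcal{T}$: it first notes that $\mathcal{T}$ and $\mathcal{S}$ share a root and hence a scope, so every variable contributes \emph{at least} one indicator leaf; it then rules out two indicator leaves for the same variable by observing that their least common ancestor in the tree $\mathcal{T}$ would have to be a product node (sum nodes in $\mathcal{T}$ have a single child), whose two relevant children would then have overlapping scopes, contradicting decomposability. Finally it observes that a tree in which only product nodes branch computes a monomial, so the value is the product of the edge weights times the product of the leaf indicators. Your structural induction with the scope-indexed invariant reaches the same conclusion by threading completeness (scope preserved across the retained sum edge) and decomposability (disjoint child scopes unioning to the parent's scope) through every node; this sidesteps the least-common-ancestor argument entirely, since duplicates are excluded locally at each product node, and it simultaneously establishes the "exactly one indicator per variable" bookkeeping and the monomial form in a single pass. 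The paper's version is shorter because it leans on Theorem~\ref{thm:treespn} for the tree structure and then reasons about leaves directly; yours is more self-contained and mechanical, and generalizes verbatim to the continuous-leaf setting mentioned in the remark. Both are complete and rest on the same two structural properties.
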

\textbf{Remark}. Although we focus our attention on Boolean random variables for the simplicity of discussion and illustration, Thm.~\ref{thm:sp} can be extended to the case where the univariate distributions at the leaf nodes are continuous or discrete distributions with countably infinitely many values, e.g., Gaussian distributions or Poisson distributions. We can simply replace the product of univariate distributions term, $\prod_{n=1}^N\mathbb{I}_{x_n}$, in Thm.~\ref{thm:sp} to be the general form $\prod_{n=1}^N p_n(X_n)$, where $p_n(X_n)$ is a univariate distribution over $X_n$. Also note that it is possible for two unique induced trees to share the same product of univariate distributions, but in this case their weight terms $\prod_{(v_i, v_i)\in\mathcal{T}_E}w_{ij}$ are guaranteed to be different. As we will see shortly, Thm.~\ref{thm:sp} implies that the joint distribution over $\{X_n\}_{n=1}^N$ represented by an SPN is essentially a mixture model with potentially exponentially many components in the mixture. 
\begin{definition}[Network cardinality]
The network cardinality $\tau_\mathcal{S}$ of an SPN $\mathcal{S}$ is the number of unique induced trees.
\end{definition}
\begin{restatable}{theorem}{ccount}
\label{thm:count}
$\tau_\mathcal{S} = f_{\mathcal{S}}(\mathbf{1}|\mathbf{1})$, where $f_{\mathcal{S}}(\mathbf{1}|\mathbf{1})$ is the value of the network polynomial of $\mathcal{S}$ with input vector $\mathbf{1}$ and all edge weights set to be $1$.
\end{restatable}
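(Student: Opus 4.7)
My plan is to prove this by structural induction on the SPN, strengthening the statement to every node: for each node $v$ in $\mathcal{S}$, let $\tau_v$ denote the number of distinct induced sub-trees (in the sense of Definition~\ref{def:treespn} applied to the sub-SPN rooted at $v$), and let $f_v(\mathbf{1}|\mathbf{1})$ be the value of the sub-network polynomial rooted at $v$ with all indicators and all edge weights equal to $1$. I will show $\tau_v = f_v(\mathbf{1}|\mathbf{1})$; applying this at $Root(\mathcal{S})$ yields the theorem.

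For the base case, if $v$ is a leaf (an indicator $\mathbb{I}_{x_n}$) then the only induced tree rooted at $v$ is $v$ itself, so $\tau_v = 1$, and $f_v(\mathbf{1}|\mathbf{1}) = 1$ as well. For the inductive step, consider an internal node $v$ and assume the claim for all descendants. If $v$ is a sum node with children $v_1,\ldots,v_k$ and weights $w_{v v_1},\ldots,w_{vv_k}$, Definition~\ref{def:treespn} forces any induced tree rooted at $v$ to pick exactly one child $v_i$, and then any induced tree rooted at $v_i$. Different choices yield different induced trees (they differ either in the chosen child or downstream), so $\tau_v = \sum_{i=1}^{k} \tau_{v_i}$, which matches $f_v(\mathbf{1}|\mathbf{1}) = \sum_i 1\cdot f_{v_i}(\mathbf{1}|\mathbf{1})$ after setting weights to $1$. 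If $v$ is a product node with children $v_1,\ldots,v_k$, all children must be included, and for each child we may independently pick any induced tree rooted at it; hence $\tau_v = \prod_{i=1}^k \tau_{v_i} = \prod_i f_{v_i}(\mathbf{1}|\mathbf{1}) = f_v(\mathbf{1}|\mathbf{1})$.

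The step that needs genuine justification is the independence of choices at a product node: I must show that independent induced sub-trees rooted at distinct children $v_i, v_j$ of a product node can always be assembled into a single induced tree of $v$ without conflict, and that assemblies arising from different tuples of child-choices produce distinct induced trees of $v$. Both facts follow from decomposability together with Theorem~\ref{thm:treespn}. By decomposability, $\mathrm{scope}(v_i)\cap\mathrm{scope}(v_j)=\varnothing$ for $i\neq j$, and since the scope of any descendant of a node is contained in that node's scope, a common descendant of $v_i$ and $v_j$ would have empty scope, which is impossible. Hence the sub-DAGs rooted at the children $v_1,\ldots,v_k$ are vertex-disjoint, so independent sub-choices cannot interfere, and different tuples of child sub-trees necessarily differ in at least one of these disjoint regions.

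The main obstacle is precisely this disjointness argument; once it is in place, both recurrences collapse into the recurrence computed by the forward evaluation of $f_{\mathcal{S}}$ at $(\mathbf{x},\mathbf{w}) = (\mathbf{1},\mathbf{1})$, and the theorem follows immediately by induction on the nodes of $\mathcal{S}$ in reverse topological order.
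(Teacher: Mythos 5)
Your proof is correct and follows essentially the same route as the paper's: an induction over the network (the paper inducts on the height of $\mathcal{S}$, you on nodes in reverse topological order) with the recurrences $\tau_v=\sum_i\tau_{v_i}$ at sum nodes and $\tau_v=\prod_i\tau_{v_i}$ at product nodes, which match the forward evaluation of $f_{\mathcal{S}}(\mathbf{1}|\mathbf{1})$. Your explicit disjointness argument at product nodes (decomposability plus non-emptiness of scopes forces the child sub-DAGs to be vertex-disjoint) is a correct justification of a step the paper's proof leaves implicit.
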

\begin{restatable}{theorem}{polynomial}
\label{thm:polynomial}
$\mathcal{S}(\mathbf{x}) = \sum_{t=1}^{\tau_\mathcal{S}}\mathcal{T}_t(\mathbf{x})$, where $\mathcal{T}_t$ is the $t$th unique induced tree of $\mathcal{S}$. 
\end{restatable}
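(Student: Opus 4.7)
The plan is to prove the identity by structural induction on the SPN $\mathcal{S}$, rooted at the top node and proceeding toward the leaves. The base case is trivial: if $\mathcal{S}$ is a single leaf $\mathbb{I}_{x}$, then its only induced subgraph is itself, so $\tau_{\mathcal{S}} = 1$ and $\mathcal{S}(\mathbf{x}) = \mathbb{I}_x = \mathcal{T}_1(\mathbf{x})$. For the inductive step, let $\mathcal{S}$ have root $r$ with children $v_1,\ldots,v_k$ and suppose the theorem holds for each sub-SPN $\mathcal{S}_{v_i}$ rooted at $v_i$, i.e., $f_{v_i}(\mathbf{x}) = \sum_{t=1}^{\tau_{v_i}} \mathcal{T}^{(i)}_t(\mathbf{x})$, where $\mathcal{T}^{(i)}_1,\ldots,\mathcal{T}^{(i)}_{\tau_{v_i}}$ enumerate the unique induced trees of $\mathcal{S}_{v_i}$.

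If $r$ is a sum node with weights $w_1,\ldots,w_k$, then by definition $\mathcal{S}(\mathbf{x}) = \sum_i w_i f_{v_i}(\mathbf{x}) = \sum_i \sum_t w_i \mathcal{T}^{(i)}_t(\mathbf{x})$. By Definition~\ref{def:treespn} (clause 2), every induced tree of $\mathcal{S}$ is obtained by picking a unique child $v_i$ of $r$, adjoining the edge $(r,v_i)$ with weight $w_i$, and then attaching an induced tree of $\mathcal{S}_{v_i}$; conversely every such choice produces a distinct induced tree of $\mathcal{S}$. Using the product-form expression from Thm.~\ref{thm:sp}, the contribution $w_i \mathcal{T}^{(i)}_t(\mathbf{x})$ is precisely the network polynomial of the induced tree obtained from $(v_i, \mathcal{T}^{(i)}_t)$, and uniqueness on the child side transfers to uniqueness on the $\mathcal{S}$ side. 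Hence the double sum enumerates the induced trees of $\mathcal{S}$ exactly once each, proving the identity at a sum root.

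If $r$ is a product node, then $\mathcal{S}(\mathbf{x}) = \prod_i f_{v_i}(\mathbf{x}) = \prod_i \sum_{t_i} \mathcal{T}^{(i)}_{t_i}(\mathbf{x}) = \sum_{(t_1,\ldots,t_k)} \prod_i \mathcal{T}^{(i)}_{t_i}(\mathbf{x})$, where the last equality is full distributivity. By Definition~\ref{def:treespn} (clause 3), every induced tree of $\mathcal{S}$ must retain all children of $r$ together with the edges $(r,v_i)$ (carrying weight $1$), and its restriction to each $\mathcal{S}_{v_i}$ is itself an induced tree of $\mathcal{S}_{v_i}$; so induced trees of $\mathcal{S}$ correspond bijectively to tuples $(t_1,\ldots,t_k)$. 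The form in Thm.~\ref{thm:sp} then shows that $\prod_i \mathcal{T}^{(i)}_{t_i}(\mathbf{x})$ equals the network polynomial of the corresponding induced tree of $\mathcal{S}$: the weight factors combine (the edge sets are disjoint across children) and the indicator factors combine into a single product $\prod_{n=1}^N \mathbb{I}_{x_n}$ over the full scope, which is legitimate because decomposability forces the scopes of the $v_i$ to be disjoint.

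The main obstacle is verifying that in the product case the bijection is clean, i.e., that no two distinct tuples yield the same induced tree and that each product $\prod_i \mathcal{T}^{(i)}_{t_i}(\mathbf{x})$ is indeed a valid induced-tree polynomial of the form given in Thm.~\ref{thm:sp}. Both facts rely essentially on decomposability: disjointness of child scopes is what lets the per-child indicator products $\prod_{n\in\mathrm{scope}(v_i)}\mathbb{I}_{x_n}$ merge into $\prod_{n=1}^N \mathbb{I}_{x_n}$ without collision, and what ensures the subtrees are recovered uniquely from their product. Once this is handled, tallying the number of tuples gives $\prod_i \tau_{v_i}$ induced trees through the product root, consistent with Thm.~\ref{thm:count}, and combining the sum-root and product-root cases by induction completes the proof.
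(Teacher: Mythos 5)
Your proof is correct and follows essentially the same route as the paper, which proves Theorems~\ref{thm:count} and~\ref{thm:polynomial} together by induction on the height of $\mathcal{S}$ and dispatches the polynomial identity with the remark that it ``follows by using the distributive law.'' Your write-up simply fills in the details the paper leaves implicit --- the bijection between induced trees of $\mathcal{S}$ and (child, subtree) pairs at a sum root or tuples of subtrees at a product root, and the role of decomposability in making the product-case correspondence injective --- which is a faithful expansion rather than a different argument.
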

\textbf{Remark}. The above four theorems prove the fact that an SPN $\mathcal{S}$ is an ensemble or mixture of trees, where each tree computes an unnormalized distribution over $X_{1:N}$. The total number of unique trees in $\mathcal{S}$ is the network cardinality $\tau_\mathcal{S}$, which only depends on the structure of $\mathcal{S}$. Each component is a simple product of univariate distributions. We illustrate the theorems above with a simple example in Fig.~\ref{fig:treespn}.
\begin{figure*}[htb]
\centering
	\includegraphics[width=\textwidth]{./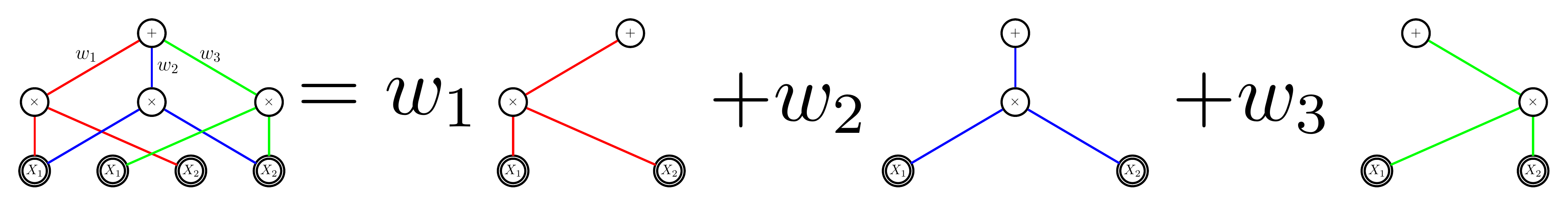}
\caption{A complete and decomposable SPN is a mixture of induced trees. Double circles indicate univariate distributions over $X_1$ and $X_2$. Different colors are used to highlight unique induced trees; each induced tree is a product of univariate distributions over $X_1$ and $X_2$.}
\label{fig:treespn}
\end{figure*}

\citet{zhao2015spnbn} show that every complete and decomposable SPN is equivalent to a bipartite Bayesian network with a layer of hidden variables and a layer of observable random variables. The number of hidden variables in the bipartite Bayesian network is equal to the number of sum nodes in $\mathcal{S}$. A naive expansion of such Bayesian network to a mixture model will lead to a huge mixture model with $2^{O(M)}$ components, where $M$ is the number of sum nodes in $\mathcal{S}$. Here we complement their theory and show that each complete and decomposable SPN is essentially a mixture of trees and the effective number of unique induced trees is given by $\tau_\mathcal{S}$. Note that $\tau_\mathcal{S} = f_\mathcal{S}(\mathbf{1}|\mathbf{1})$ depends only on the network structure, and can often be much smaller than $2^{O(M)}$. Without loss of generality,  assuming that in $\mathcal{S}$ layers of sum nodes are alternating with layers of product nodes, then $f_\mathcal{S}(\mathbf{1}|\mathbf{1}) = \Omega(2^h)$, where $h$ is the height of $\mathcal{S}$. However, the exponentially many trees are recursively merged and combined in $\mathcal{S}$ such that the overall network size is still tractable. 

\subsection{Maximum Likelihood Estimation as SP}
Let's consider the likelihood function computed by an SPN $\mathcal{S}$ over $N$ binary random variables with model parameters $\mathbf{w}$ and input vector $\mathbf{x}\in\{0,1\}^N$. Here the model parameters in $\mathcal{S}$ are edge weights from every sum node, and we collect them together into a long vector $\mathbf{w}\in\RR_{++}^D$, where $D$ corresponds to the number of edges emanating from sum nodes in $\mathcal{S}$. By definition, the probability distribution induced by $\mathcal{S}$ can be computed by 
$\Pr_{\mathcal{S}}(\mathbf{x}|\mathbf{w})\triangleq\frac{f_{\mathcal{S}}(\mathbf{x}|\mathbf{w})}{\sum_{\mathbf{x}}f_{\mathcal{S}}(\mathbf{x}|\mathbf{w})} = \frac{f_{\mathcal{S}}(\mathbf{x}|\mathbf{w})}{f_\mathcal{S}(\mathbf{1}|\mathbf{w})}$.
\begin{corollary}
\label{thm:mle}
Let $\mathcal{S}$ be an SPN with weights $\mathbf{w}\in\RR_{++}^D$ over input vector $\mathbf{x}\in\{0,1\}^N$, the network polynomial $f_{\mathcal{S}}(\mathbf{x}|\mathbf{w})$ is a posynomial: $f_{\mathcal{S}}(\mathbf{x}|\mathbf{w}) = \sum_{t = 1}^{f_\mathcal{S}(\mathbf{1}|\mathbf{1})} \prod_{n=1}^N\mathbb{I}_{x_n}^{(t)}\prod_{d=1}^Dw_d^{\mathbb{I}_{w_d\in\mathcal{T}_t}}$, where $\mathbb{I}_{w_d\in\mathcal{T}_t}$ is the indicator variable whether $w_d$ is in the $t$-th induced tree $\mathcal{T}_t$ or not. Each monomial corresponds exactly to a unique induced tree SPN from $\mathcal{S}$. 
\end{corollary}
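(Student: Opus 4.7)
The plan is to derive the corollary as a direct bookkeeping consequence of Theorems~\ref{thm:polynomial}, \ref{thm:sp}, and \ref{thm:count}. First, combining Theorem~\ref{thm:polynomial} with Theorem~\ref{thm:count} lets me write
\[
f_\mathcal{S}(\mathbf{x}|\mathbf{w}) \;=\; \sum_{t=1}^{f_\mathcal{S}(\mathbf{1}|\mathbf{1})} \mathcal{T}_t(\mathbf{x}),
\]
so the network polynomial is already expressed as a sum indexed by the induced trees of $\mathcal{S}$. All that remains is to rewrite each summand in the form prescribed by the corollary and then verify that the result meets the definition of a posynomial in $\mathbf{w}$.

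Second, I would apply Theorem~\ref{thm:sp} to each induced tree to obtain $\mathcal{T}_t(\mathbf{x}) = \bigl(\prod_{(v_i,v_j)\in (\mathcal{T}_t)_E} w_{ij}\bigr)\bigl(\prod_{n=1}^N \mathbb{I}_{x_n}^{(t)}\bigr)$, noting that edges out of product nodes contribute factors of $1$ and can be dropped from the weight product. The key rewriting step is to observe that for each global weight $w_d$ the exponent in the tree-local product is either $0$ or $1$ depending on whether the edge carrying $w_d$ belongs to $\mathcal{T}_t$, so
\[
\prod_{(v_i,v_j)\in (\mathcal{T}_t)_E} w_{ij} \;=\; \prod_{d=1}^D w_d^{\mathbb{I}_{w_d \in \mathcal{T}_t}},
\]
where zero-exponent factors collapse harmlessly to $1$. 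Substituting back yields exactly the formula claimed in the corollary.

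Third, I would verify the posynomial property and the one-to-one correspondence with induced trees. Viewed as a function of $\mathbf{w}\in\RR_{++}^D$, each summand is $c_t \prod_d w_d^{\mathbb{I}_{w_d\in\mathcal{T}_t}}$ with coefficient $c_t = \prod_{n=1}^N \mathbb{I}_{x_n}^{(t)} \in \{0,1\}$; dropping the terms with $c_t = 0$ leaves a sum of monomials with strictly positive coefficients, matching the posynomial definition. For the bijection claim, I would observe that two distinct induced trees must differ in the child selected at some sum node (at product nodes, by Definition~\ref{def:treespn}, all children are forced into the tree), so their exponent vectors $(\mathbb{I}_{w_d \in \mathcal{T}_t})_{d=1}^D$ differ and the associated monomials in $\mathbf{w}$ are genuinely distinct. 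The main subtlety, and essentially the only place where real care is required, is precisely this bijection check: the algebraic expansion itself is formal, but the sum-node-versus-product-node distinction is what prevents two different induced trees from collapsing into the same monomial and having to be merged.
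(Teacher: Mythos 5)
Your proposal is correct and follows exactly the route the paper takes: the paper states the corollary as a direct consequence of Theorems~\ref{thm:sp}, \ref{thm:count} and \ref{thm:polynomial} without further detail, and your argument simply fills in that derivation (including the useful extra check that distinct induced trees yield distinct exponent vectors, which the paper only asserts in the remark after Theorem~\ref{thm:sp}). No gaps.
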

The above statement is a direct corollary of Thm.~\ref{thm:sp}, Thm.~\ref{thm:count} and Thm.~\ref{thm:polynomial}. From the definition of network polynomial, we know that $f_\mathcal{S}$ is a multilinear function of the indicator variables. Corollary~\ref{thm:mle} works as a complement to characterize the functional form of a network polynomial in terms of $\mathbf{w}$. It follows that the likelihood function $\mathcal{L}_\mathcal{S}(\mathbf{w}) \triangleq\Pr_\mathcal{S}(\mathbf{x}|\mathbf{w})$ can be expressed as the ratio of two posynomial functions. We now show that the optimization problem based on MLE is an SP\@. Using the definition of $\Pr(\mathbf{x}|\mathbf{w})$ and Corollary~\ref{thm:mle}, let $\tau = f_\mathcal{S}(\mathbf{1}|\mathbf{1})$, the MLE problem can be rewritten as
\begin{equation}
\label{equ:mleprime}
\begin{aligned}
& \text{maximize}_{\mathbf{w}} && \frac{f_{\mathcal{S}}(\mathbf{x}|\mathbf{w})}{f_{\mathcal{S}}(\mathbf{1}|\mathbf{w})} = \frac{\sum_{t = 1}^\tau \prod_{n=1}^N\mathbb{I}_{x_n}^{(t)}\prod_{d=1}^Dw_d^{\mathbb{I}_{w_d\in\mathcal{T}_t}}}
{\sum_{t = 1}^\tau \prod_{d=1}^Dw_d^{\mathbb{I}_{w_d\in\mathcal{T}_t}}} \\
& \text{subject to} && \mathbf{w}\in\RR_{++}^D
\end{aligned}
\end{equation}
\begin{restatable}{proposition}{prop}
\small
\label{prop:1}
The MLE problem for SPNs is a signomial program.
\end{restatable}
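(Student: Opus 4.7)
The plan is to exhibit an epigraph reformulation of the MLE problem~(\ref{equ:mleprime}) that fits the signomial programming standard form of Section~2.2. The starting point is Corollary~\ref{thm:mle}, which already shows that the numerator $f_{\mathcal{S}}(\mathbf{x}|\mathbf{w})$ is a posynomial in $\mathbf{w}\in\RR_{++}^D$; setting $\mathbf{x}=\mathbf{1}$ in the same expression shows that the denominator $f_{\mathcal{S}}(\mathbf{1}|\mathbf{w})$ is a posynomial as well. So the likelihood to be maximized is a ratio of two posynomials over the positive orthant, and the only real question is how to fit such a ratio into the SP template (which demands minimization of a signomial subject to signomial inequality constraints and monomial equality constraints).

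Next, I would introduce a positive auxiliary variable $t>0$ and perform the standard epigraph trick: maximize $t$ subject to $t \le f_{\mathcal{S}}(\mathbf{x}|\mathbf{w})/f_{\mathcal{S}}(\mathbf{1}|\mathbf{w})$. Since $f_{\mathcal{S}}(\mathbf{1}|\mathbf{w})>0$ on the positive orthant, this inequality is equivalent to the polynomial inequality $t\cdot f_{\mathcal{S}}(\mathbf{1}|\mathbf{w})-f_{\mathcal{S}}(\mathbf{x}|\mathbf{w})\le 0$. Flipping the sign of the objective gives minimize $-t$ subject to the above inequality together with $\mathbf{w}\in\RR_{++}^D$ and $t\in\RR_{++}$. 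Now I would check that each ingredient is a signomial in the enlarged variable vector $(t,\mathbf{w})$: the objective $-t$ is a monomial with coefficient $-1$, hence a signomial; the product $t\cdot f_{\mathcal{S}}(\mathbf{1}|\mathbf{w})$ is a monomial times a posynomial, which expands to a posynomial in $(t,\mathbf{w})$; and subtracting the posynomial $f_{\mathcal{S}}(\mathbf{x}|\mathbf{w})$ produces a sum of monomials with both positive and negative coefficients, i.e.\ a signomial. The extension to $N$ i.i.d.\ samples is immediate since posynomials are closed under products, so both $\prod_i f_{\mathcal{S}}(\mathbf{x}_i|\mathbf{w})$ and $f_{\mathcal{S}}(\mathbf{1}|\mathbf{w})^N$ remain posynomials and the same reformulation goes through.

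There is no hard technical obstacle here once Corollary~\ref{thm:mle} is in hand; the proof is essentially a verification against the definition. The conceptual point worth emphasizing is \emph{why} this is an SP rather than a GP: the epigraph constraint inherits the negative coefficient from $-f_{\mathcal{S}}(\mathbf{x}|\mathbf{w})$, and it is precisely this negative coefficient that blocks the logarithmic transformation trick and prevents a convex reformulation. This is consistent with the paper's earlier remark that SPs are generically hard, and it motivates the need for the monomial-approximation and CCCP relaxations developed in the subsequent sections.
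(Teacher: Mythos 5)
Your proposal is correct and follows essentially the same route as the paper: the paper also introduces a positive auxiliary variable $z$, rewrites the objective as minimizing $-z$ subject to $z\, f_{\mathcal{S}}(\mathbf{1}|\mathbf{w}) - f_{\mathcal{S}}(\mathbf{x}|\mathbf{w}) \le 0$ over $\mathbf{w}\in\RR_{++}^D$, $z>0$, and verifies that the objective and constraint are signomials. The only difference is that the paper spells out the two-directional argument that the optimal values of the original and reformulated problems coincide, which your epigraph argument compresses but does not omit in substance.
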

Being nonconvex in general, SP is essentially hard to solve from a computational perspective~\cite{boyd2007tutorial,chiang2005geometric}. However, despite the hardness of SP in general, the objective function in the MLE formulation of SPNs has a special structure, i.e., it is the ratio of two posynomials, which makes the design of efficient optimization algorithms possible. 

\subsection{Difference of Convex Functions}
Both PGD and EG are first-order methods and they can be viewed as approximating the SP after applying a logarithmic transformation to the objective function only. Although (\ref{equ:mleprime}) is a signomial program, its objective function is expressed as the ratio of two posynomials. Hence, we can still apply the \emph{logarithmic transformation trick} used in geometric programming to its objective function and to the variables to be optimized. More concretely, let $w_d = \exp(y_d), \forall d$ and take the $\log$ of the objective function; it becomes equivalent to maximize the following new objective without any constraint on $\mathbf{y}$:
\begin{equation}
\label{equ:cccp}
\begin{aligned}
& \text{maximize} & \log\left(\sum_{t=1}^{\tau(\mathbf{x})}\exp\left(\sum_{d=1}^Dy_d\mathbb{I}_{y_d\in\mathcal{T}_t}\right)\right) - \log\left(\sum_{t=1}^{\tau}\exp\left(\sum_{d=1}^Dy_d\mathbb{I}_{y_d\in\mathcal{T}_t}\right)\right)
\end{aligned}
\end{equation}
Note that in the first term of Eq.~\ref{equ:cccp} the upper index $\tau(\mathbf{x})\leq \tau\triangleq f_\mathcal{S}(\mathbf{1}|\mathbf{1})$ depends on the current input $\mathbf{x}$. By transforming into the log-space, we naturally guarantee the positivity of the solution at each iteration, hence transforming a constrained optimization problem into an unconstrained optimization problem without any sacrifice. Both terms in Eq.~\ref{equ:cccp} are convex functions in $\mathbf{y}$ after the transformation. Hence, the transformed objective function is now expressed as the difference of two convex functions, which is called a DC function~\citep{hartman1959functions}. This helps us to design two efficient algorithms to solve the problem based on the general idea of sequential convex approximations for nonlinear programming. 
\subsubsection{Sequential Monomial Approximation}
Let's consider the linearization of both terms in Eq.~\ref{equ:cccp} in order to apply first-order methods in the transformed space. To compute the gradient with respect to different components of $\mathbf{y}$, we view each node of an SPN as an intermediate function of the network polynomial and apply the chain rule to back-propagate the gradient. 
The differentiation of $f_{\mathcal{S}}(\mathbf{x}|\mathbf{w})$ with respect to the root node of the network is set to be 1. The differentiation of the network polynomial with respect to a partial function at each node can then be computed in two passes of the network: the bottom-up pass evaluates the values of all partial functions given the current input $\mathbf{x}$ and the top-down pass differentiates the network polynomial with respect to each partial function. 
Following the evaluation-differentiation passes, the gradient of the objective function in (\ref{equ:cccp}) can be computed in $O(|\mathcal{S}|)$. Furthermore, although the computation is conducted in $\mathbf{y}$, the results are fully expressed in terms of $\mathbf{w}$, which suggests that in practice we do not need to explicitly construct $\mathbf{y}$ from $\mathbf{w}$. 

Let $f(\mathbf{y}) = \log f_\mathcal{S}(\mathbf{x}|\mathbf{\exp(\mathbf{y})}) - \log f_\mathcal{S}(\mathbf{1}|\mathbf{\exp(\mathbf{y})})$. 
It follows that approximating $f(\mathbf{y})$ with the best linear function is equivalent to using the best monomial approximation of the signomial program~(\ref{equ:mleprime}). This leads to a sequential monomial approximations of the original SP formulation: at each iteration $\mathbf{y}^{(k)}$, we linearize both terms in Eq.~\ref{equ:cccp} and form the optimal monomial function in terms of $\mathbf{w}^{(k)}$. The additive update of $\mathbf{y}^{(k)}$ leads to a multiplicative update of $\mathbf{w}^{(k)}$ since $\mathbf{w}^{(k)} = \exp(\mathbf{y}^{(k)})$, and we use a backtracking line search to determine the step size of the update in each iteration.

\subsubsection{Concave-convex Procedure}
Sequential monomial approximation fails to use the structure of the problem when learning SPNs. Here we propose another approach based on the concave-convex procedure (CCCP)~\citep{yuille2002concave} to use the fact that the objective function is expressed as the difference of two convex functions. At a high level CCCP solves a sequence of concave surrogate optimizations until convergence. In many cases, the maximum of a concave surrogate function can only be solved using other convex solvers and as a result the efficiency of the CCCP highly depends on the choice of the convex solvers. However, we show that by a suitable transformation of the network we can compute the maximum of the concave surrogate in closed form in time that is linear in the network size, which leads to a very efficient algorithm for learning the parameters of SPNs. We also prove the convergence properties of our algorithm.

Consider the objective function to be maximized in DCP: $f(\mathbf{y}) = \log f_\mathcal{S}(\mathbf{x}|\exp(\mathbf{y})) - \log f_\mathcal{S}(\mathbf{1}|\exp(\mathbf{y})) \triangleq f_1(\mathbf{y}) + f_2(\mathbf{y})$ where $f_1(\mathbf{y})\triangleq \log f_\mathcal{S}(\mathbf{x}|\exp(\mathbf{y}))$ is a convex function and $f_2(\mathbf{y})\triangleq - \log f_\mathcal{S}(\mathbf{1}|\exp(\mathbf{y}))$ is a concave function. We can linearize only the convex part $f_1(\mathbf{y})$ to obtain a surrogate function
\begin{equation}
\label{equ:surrogate}
\hat{f}(\mathbf{y}, \mathbf{z}) = f_1(\mathbf{z}) + \nabla_\mathbf{y}f_1(\mathbf{z})^T(\mathbf{y} - \mathbf{z}) + f_2(\mathbf{y})
\end{equation}
for $\forall \mathbf{y}, \mathbf{z}\in\RR^D$. Now $\hat{f}(\mathbf{y}, \mathbf{z})$ is a concave function in $\mathbf{y}$. Due to the convexity of $f_1(\mathbf{y})$ we have $f_1(\mathbf{y})\geq f_1(\mathbf{z}) + \nabla_\mathbf{y}f_1(\mathbf{z})^T(\mathbf{y} - \mathbf{z}), \forall\mathbf{y}, \mathbf{z}$ and as a result the following two properties always hold for $\forall \mathbf{y}, \mathbf{z}$: $\hat{f}(\mathbf{y}, \mathbf{z})\leq f(\mathbf{y})$ and $\hat{f}(\mathbf{y}, \mathbf{y}) = f(\mathbf{y})$.
CCCP updates $\mathbf{y}$ at each iteration $k$ by solving $\mathbf{y}^{(k)}\in \argmax_{\mathbf{y}}\hat{f}(\mathbf{y}, \mathbf{y}^{(k-1)})$
unless we already have $\mathbf{y}^{(k-1)}\in\argmax_{\mathbf{y}}\hat{f}(\mathbf{y}, \mathbf{y}^{(k-1)})$, in which case a generalized fixed point $\mathbf{y}^{(k-1)}$ has been found and the algorithm stops. 

It is easy to show that at each iteration of CCCP we always have $f(\mathbf{y}^{(k)})\geq f(\mathbf{y}^{(k-1)})$. Note also that $f(\mathbf{y})$ is computing the log-likelihood of input $\mathbf{x}$ and therefore it is bounded above by 0. By the monotone convergence theorem, $\lim_{k\rightarrow\infty}f(\mathbf{y}^{(k)})$ exists and the sequence $\{f(\mathbf{y}^{(k)})\}$ converges. 

We now discuss how to compute a closed form solution for the maximization of the concave surrogate $\hat{f}(\mathbf{y}, \mathbf{y}^{(k-1)})$. Since $\hat{f}(\mathbf{y}, \mathbf{y}^{(k-1)})$ is differentiable and concave for any fixed $\mathbf{y}^{(k-1)}$, a sufficient and necessary condition to find its maximum is 
\begin{equation}
\label{equ:equilibrium}
\nabla_\mathbf{y}\hat{f}(\mathbf{y}, \mathbf{y}^{(k-1)}) = \nabla_\mathbf{y}f_1(\mathbf{y}^{(k-1)}) + \nabla_\mathbf{y}f_2(\mathbf{y})  = 0
\end{equation}
In the above equation, if we consider only the partial derivative with respect to $y_{ij}(w_{ij})$, we obtain
\begin{align}
\small
\label{equ:ne}
 & \frac{w^{(k-1)}_{ij}f_{v_j}(\mathbf{x}|\mathbf{w}^{(k-1)})}{f_\mathcal{S}(\mathbf{x}|\mathbf{w}^{(k-1)})}\frac{\partial f_\mathcal{S}(\mathbf{x}|\mathbf{w}^{(k-1)})}{\partial f_{v_i}(\mathbf{x}|\mathbf{w}^{(k-1)})} =  \frac{w_{ij}f_{v_j}(\mathbf{1}|\mathbf{w})}{f_\mathcal{S}(\mathbf{1}|\mathbf{w})}
\frac{\partial f_\mathcal{S}(\mathbf{1}|\mathbf{w})}{\partial f_{v_i}(\mathbf{1}|\mathbf{w})}
\end{align}
Eq.~\ref{equ:ne} leads to a system of $D$ nonlinear equations, which is hard to solve in closed form.  However, if we do a change of variable by considering locally normalized weights $w'_{ij}$ (i.e., $w'_{ij} \ge 0$ and $\sum_j w'_{ij} =1 \; \forall i$), then a solution can be easily computed.  As described in~\citep{peharz2015theoretical,zhao2015spnbn}, any SPN can be transformed into an equivalent {\em normal} SPN with locally normalized weights in a bottom up pass as follows: 
\begin{equation}
\label{eq:normalize}
w'_{ij} = \frac{w_{ij}f_{v_j}(\mathbf{1}|\mathbf{w})}{\sum_j w_{ij}f_{v_j}(\mathbf{1}|\mathbf{w})}
\end{equation}
We can then replace $w_{ij}f_{v_j}(\mathbf{1}|\mathbf{w})$ in the above equation by the expression it is equal to in Eq.~\ref{equ:ne} to obtain a closed form solution: 
\begin{equation}
\label{equ:update}
w'_{ij} \propto w_{ij}^{(k-1)}\frac{f_{v_j}(\mathbf{x}|\mathbf{w}^{(k-1)})}{f_\mathcal{S}(\mathbf{x}|\mathbf{w}^{(k-1)})}\frac{\partial f_\mathcal{S}(\mathbf{x}|\mathbf{w}^{(k-1)})}{\partial f_{v_i}(\mathbf{x}|\mathbf{w}^{(k-1)})}
\end{equation}
Note that in the above derivation both $f_{v_i}(\mathbf{1}|\mathbf{w})/f_{\mathcal{S}}(\mathbf{1}|\mathbf{w})$ and $\partial f_{\mathcal{S}}(\mathbf{1}|\mathbf{w}) / \partial f_{v_i}(\mathbf{1}|\mathbf{w})$ can be treated as constants and hence absorbed since $w_{ij}', \forall j$ are constrained to be locally normalized. In order to obtain a solution to Eq.~\ref{equ:ne}, for each edge weight $w_{ij}$, the sufficient statistics include only three terms, i.e, the evaluation value at $v_j$, the differentiation value at $v_i$ and the previous edge weight $w_{ij}^{(k-1)}$, all of which can be obtained in two passes of the network for each input $\mathbf{x}$. 
Thus the computational complexity to obtain a maximum of the concave surrogate is $O(|\mathcal{S}|)$. Interestingly, Eq.~\ref{equ:update} leads to the same update formula as in the EM algorithm~\cite{peharz2015foundations} despite the fact that CCCP and EM start from different perspectives. We show that all the limit points of the sequence $\{\mathbf{w}^{(k)}\}_{k=1}^\infty$ are guaranteed to be stationary points of DCP in (\ref{equ:cccp}).
\begin{restatable}{theorem}{thmb}
\label{thm:convergence}
Let $\{\mathbf{w}^{(k)}\}_{k=1}^\infty$ be any sequence generated using Eq.~\ref{equ:update} from any positive initial point, then all the limiting points of $\{\mathbf{w}^{(k)}\}_{k=1}^\infty$ are stationary points of the DCP in (\ref{equ:cccp}). In addition, $\lim_{k\rightarrow\infty}f(\mathbf{y}^{(k)}) = f(\mathbf{y}^*)$, where $\mathbf{y}^*$ is some stationary point of (\ref{equ:cccp}).
\end{restatable}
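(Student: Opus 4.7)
The plan is to apply the standard global convergence framework for the concave-convex procedure: establish that $f$ is a monotone ascent function for the update \ref{equ:update}, pass to a compact reparameterization so that subsequential limits exist, and then use continuity of $\nabla f_1$ and $\nabla f_2$ to promote the per-step surrogate stationarity into genuine stationarity at any limit point.

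First I would verify monotone ascent. Convexity of $f_1$ gives $f_1(\mathbf{y}) \ge f_1(\mathbf{z}) + \nabla f_1(\mathbf{z})^T(\mathbf{y}-\mathbf{z})$, hence $f(\mathbf{y}) \ge \hat{f}(\mathbf{y},\mathbf{z})$ with equality at $\mathbf{y}=\mathbf{z}$. Combined with $\mathbf{y}^{(k)} \in \argmax_{\mathbf{y}}\hat{f}(\mathbf{y},\mathbf{y}^{(k-1)})$, this gives the ascent chain $f(\mathbf{y}^{(k)}) \ge \hat{f}(\mathbf{y}^{(k)},\mathbf{y}^{(k-1)}) \ge \hat{f}(\mathbf{y}^{(k-1)},\mathbf{y}^{(k-1)}) = f(\mathbf{y}^{(k-1)})$. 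Since $f$ is a log-likelihood it is bounded above by $0$, so by monotone convergence $f(\mathbf{y}^{(k)})\uparrow f^*$ for some $f^*\le 0$; this already yields the second assertion once the first is proven. For compactness I would use Eq.~\ref{eq:normalize} to reparameterize so that, without loss of generality, each $\mathbf{w}^{(k)}$ lies in a compact product of probability simplices (locally normalized weights parameterize the same class of distributions, so fixing this gauge is harmless). Every subsequence then admits a convergent sub-subsequence $\mathbf{w}^{(k_j)} \to \mathbf{w}^*$.

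For stationarity, the update \ref{equ:update} is equivalent to $\nabla_{\mathbf{y}}\hat{f}(\mathbf{y}^{(k)},\mathbf{y}^{(k-1)}) = 0$, i.e., $\nabla f_1(\mathbf{y}^{(k-1)}) + \nabla f_2(\mathbf{y}^{(k)}) = 0$ with \emph{mixed} arguments. To turn this into $\nabla f(\mathbf{y}^*) = 0$ at a subsequential limit, I would show that $\mathbf{w}^{(k_j+1)}$ has the same limit as $\mathbf{w}^{(k_j)}$: the ascent gap $\hat{f}(\mathbf{y}^{(k_j+1)},\mathbf{y}^{(k_j)}) - \hat{f}(\mathbf{y}^{(k_j)},\mathbf{y}^{(k_j)})$ is squeezed between $0$ and $f(\mathbf{y}^{(k_j+1)}) - f(\mathbf{y}^{(k_j)})\to 0$, and strict concavity of $\hat{f}(\cdot,\mathbf{y}^{(k_j)})$ on each local simplex (after the normalization gauge fixes the scale ambiguity of each sum node) forces $\mathbf{y}^{(k_j+1)} - \mathbf{y}^{(k_j)} \to 0$. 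Continuity of $\nabla f_1$ and $\nabla f_2$ on the positive orthant then yields $\nabla f_1(\mathbf{y}^*) + \nabla f_2(\mathbf{y}^*) = 0$, so $\mathbf{y}^*$ is a stationary point of \ref{equ:cccp}. The final claim on the limit of $f(\mathbf{y}^{(k)})$ follows from continuity of $f$ at $\mathbf{y}^*$.

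The hardest part will be handling the possibility that some components of $\mathbf{w}^{(k)}$ converge to $0$, which corresponds to $y_{ij}\to-\infty$ and makes the logarithmic parameterization singular: neither $\nabla f_1$ nor $\nabla f_2$ is continuous at such boundary points, so the limit argument in the previous paragraph cannot be applied naively. I expect to address this either (i) by arguing that, thanks to the multiplicative form of \ref{equ:update}, any coordinate that survives in the limit retains its stationarity condition while coordinates that vanish correspond to boundary KKT conditions on the normalized simplex, or (ii) by invoking Zangwill's global convergence theorem directly: monotone ascent and compactness are already in hand, and the remaining obligation is closedness of the CCCP point-to-set map away from fixed points, which follows from uniqueness of the surrogate maximizer on each simplex face together with continuity of the explicit formula \ref{equ:update}. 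Either route delivers the conclusion that every limit point of $\{\mathbf{w}^{(k)}\}$ is a stationary point of the DCP in \ref{equ:cccp}.
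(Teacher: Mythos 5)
Your ascent argument and your compactness step (passing to locally normalized weights so the iterates live in a compact set) are exactly what the paper does, and your fallback route (ii) --- Zangwill's global convergence theorem with monotone ascent, compactness, and closedness of the update map --- is the paper's actual proof. However, your primary route for stationarity has a genuine gap: it hinges on strict concavity of $\hat{f}(\cdot,\mathbf{y}^{(k_j)})$ on each local simplex to force $\mathbf{y}^{(k_j+1)}-\mathbf{y}^{(k_j)}\to 0$. That strict concavity is not available. The linearized part of the surrogate is affine, and $f_2(\mathbf{y})=-\log f_\mathcal{S}(\mathbf{1}|\exp(\mathbf{y}))$ is concave but not strictly concave even after gauge-fixing the per-node scale: the paper exhibits a small SPN (the symmetric-weights counterexample in the appendix) for which the surrogate maximizer is an entire face of the simplex, i.e., uncountably many maximizers. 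So the vanishing-increment argument, and with it the passage from the mixed-argument condition $\nabla f_1(\mathbf{y}^{(k-1)})+\nabla f_2(\mathbf{y}^{(k)})=0$ to $\nabla f(\mathbf{y}^*)=0$, does not go through as stated. The same non-uniqueness also undercuts the phrase ``uniqueness of the surrogate maximizer on each simplex face'' in your route (ii).

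What actually closes the argument in the paper is different in emphasis: the update in Eq.~\ref{equ:update} is an \emph{explicit formula} that selects one particular maximizer of the surrogate (the locally normalized one), turning the point-to-set CCCP map into a point-to-point map $\Phi$. Both the numerator and denominator of $\Phi_{ij}(\mathbf{w})$ are shown (by the top-down differentiation recursion) to be posynomials in $\mathbf{w}$, so $\Phi$ is continuous wherever the denominator is strictly positive --- which holds on the interior of the hypercube, where all $f_v(\mathbf{x}|\mathbf{w})>0$. Continuity of the selected update, not uniqueness of the maximizer, is what delivers closedness for Zangwill's theorem; the boundary degeneracy you correctly worry about is handled by restricting to interior initializations and noting that the multiplicative update keeps finite iterates in the interior. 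If you replace your strict-concavity step with this continuity-of-the-explicit-formula argument, your route (ii) becomes the paper's proof.
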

We summarize all four algorithms and highlight their connections and differences in Table~\ref{table:summary}. Although we mainly discuss the batch version of those algorithms, all of the four algorithms can be easily adapted to work in stochastic and/or parallel settings.
\begin{table*}[htb]
\centering
\small
\caption{Summary of PGD, EG, SMA and CCCP. Var. means the optimization variables.}
\label{table:summary}
\begin{tabular}{|l|l|l|l|l|}\hline
\textbf{Algo} & \textbf{Var.} & \textbf{Update Type} &  \textbf{Update Formula}\\\hline
PGD & $\mathbf{w}$ & Additive & $w_d^{(k+1)}\leftarrow P_{\RR_{++}^\epsilon}\left\lbrace w_d^{(k)} + \gamma(\nabla_{w_d}f_1(\mathbf{w}^{(k)}) - \nabla_{w_d}f_2(\mathbf{w}^{(k)}))\right\rbrace$ \\\hline
EG & $\mathbf{w}$ & Multiplicative & $w_d^{(k+1)}\leftarrow w_d^{(k)}\exp\{\gamma(\nabla_{w_d}f_1(\mathbf{w}^{(k)}) - \nabla_{w_d}f_2(\mathbf{w}^{(k)}))\}$ \\\hline
SMA & $\log\mathbf{w}$ & Multiplicative & $w_d^{(k+1)}\leftarrow w_d^{(k)}\exp\{\gamma w_d^{(k)}\times(\nabla_{w_d}f_1(\mathbf{w}^{(k)}) - \nabla_{w_d}f_2(\mathbf{w}^{(k)}))\}$ \\\hline
CCCP & $\log\mathbf{w}$ & Multiplicative & $w_{ij}^{(k+1)}\propto w_{ij}^{(k)}\times\nabla_{v_i}f_{\mathcal{S}}(\mathbf{w}^{(k)})\times f_{v_j}(\mathbf{w}^{(k)})$\\\hline
\end{tabular}
\end{table*}

\section{Experiments}
\subsection{Experimental Setting}
We conduct experiments on 20 benchmark data sets from various domains to compare and evaluate the convergence performance of the four algorithms: PGD, EG, SMA and CCCP (EM). These 20 data sets are widely used in~\citep{gens2013learning,rooshenas2014learning} to assess different SPNs for the task of density estimation. All the features in the 20 data sets are binary features. All the SPNs that are used for comparisons of PGD, EG, SMA and CCCP are trained using LearnSPN~\citep{gens2013learning}. We discard the weights returned by LearnSPN and use random weights as initial model parameters. The random weights are determined by the same random seed in all four algorithms. Detailed information about these 20 datasets and the SPNs used in the experiments are provided in the supplementary material. 

\subsection{Parameter Learning}
We implement all four algorithms in C++. For each algorithm, we set the maximum number of iterations to 50. If the absolute difference in the training log-likelihood at two consecutive steps is less than $0.001$, the algorithms are stopped. For PGD, EG and SMA, we combine each of them with backtracking line search and use a weight shrinking coefficient set at $0.8$. The learning rates are initialized to $1.0$ for all three methods. For PGD, we set the projection margin $\epsilon$ to 0.01. There is no learning rate and no backtracking line search in CCCP\@. We set the smoothing parameter to $0.001$ in CCCP to avoid numerical issues. 

We show in Fig.~\ref{fig:lld} the average log-likelihood scores on 20 training data sets to evaluate the convergence speed and stability of PGD, EG, SMA and CCCP\@. Clearly, CCCP wins by a large margin over PGD, EG and SMA, both in convergence speed and solution quality. Furthermore, among the four algorithms, CCCP is the most stable one due to its guarantee that the log-likelihood (on training data) will not decrease after each iteration. As shown in Fig.~\ref{fig:lld}, the training curves of CCCP are more smooth than the other three methods in almost all the cases. These 20 experiments also clearly show that CCCP often converges in a few iterations. On the other hand, PGD, EG and SMA are on par with each other since they are all first-order methods. SMA is more stable than PGD and EG and often achieves better solutions than PGD and EG\@. On large data sets, SMA also converges faster than PGD and EG. Surprisingly, EG performs worse than PGD in some cases and is quite unstable despite the fact that it admits multiplicative updates. The ``hook shape'' curves of PGD in some data sets, e.g. Kosarak and KDD, are due to the projection operations.
\begin{table}[htb]
\centering
\small
\caption{Average log-likelihoods on test data. Highest log-likelihoods are highlighted in bold. $\uparrow$ shows statistically better log-likelihoods than CCCP and $\downarrow$ shows statistically worse log-likelihoods than CCCP. The significance is measured based on the Wilcoxon signed-rank test.}
\label{table:log-likelihood}
\begin{tabular}{|l||r|r|r||l||r|r|r|}\hline
\textbf{Data set} & CCCP & LearnSPN & ID-SPN & \textbf{Data set} & CCCP & LearnSPN & ID-SPN\\\hline
NLTCS & \textbf{-6.029} & $\downarrow$-6.099 & $\downarrow$-6.050  & DNA & -84.921 & $\downarrow$-85.237 & $\uparrow$\textbf{-84.693} \\
MSNBC & \textbf{-6.045} & $\downarrow$-6.113 & -6.048 & Kosarak & -10.880 & $\downarrow$-11.057 & \textbf{-10.605} \\
KDD 2k & \textbf{-2.134} & $\downarrow$-2.233 & $\downarrow$-2.153 & MSWeb & -9.970 & $\downarrow$-10.269 & \textbf{-9.800} \\
Plants & -12.872 & $\downarrow$-12.955 & $\uparrow$\textbf{-12.554} & Book & -35.009 & $\downarrow$-36.247 & $\uparrow$\textbf{-34.436} \\
Audio & -40.020 & $\downarrow$-40.510 & \textbf{-39.824} & EachMovie & -52.557 & $\downarrow$-52.816 & $\uparrow$\textbf{-51.550} \\
Jester & \textbf{-52.880} & $\downarrow$-53.454 & $\downarrow$-52.912 & WebKB & -157.492 & $\downarrow$-158.542 & $\uparrow$\textbf{-153.293} \\
Netflix & -56.782 & $\downarrow$-57.385 & $\uparrow$\textbf{-56.554} & Reuters-52 & -84.628 & $\downarrow$-85.979 & $\uparrow$\textbf{-84.389} \\
Accidents & -27.700 & $\downarrow$-29.907 & $\uparrow$\textbf{-27.232} & 20 Newsgrp & -153.205 & $\downarrow$-156.605 & $\uparrow$\textbf{-151.666} \\
Retail & \textbf{-10.919} & $\downarrow$-11.138 & -10.945 & BBC & \textbf{-248.602} & $\downarrow$-249.794 & $\downarrow$-252.602\\
Pumsb-star & -24.229 & $\downarrow$-24.577 & $\uparrow$\textbf{-22.552} & Ad & \textbf{-27.202} & $\downarrow$-27.409 & $\downarrow$-40.012 \\\hline
\end{tabular}
\end{table}
\begin{figure*}[htb]
\centering
\begin{subfigure}[b]{0.18\textwidth}
	\includegraphics[width=\textwidth]{./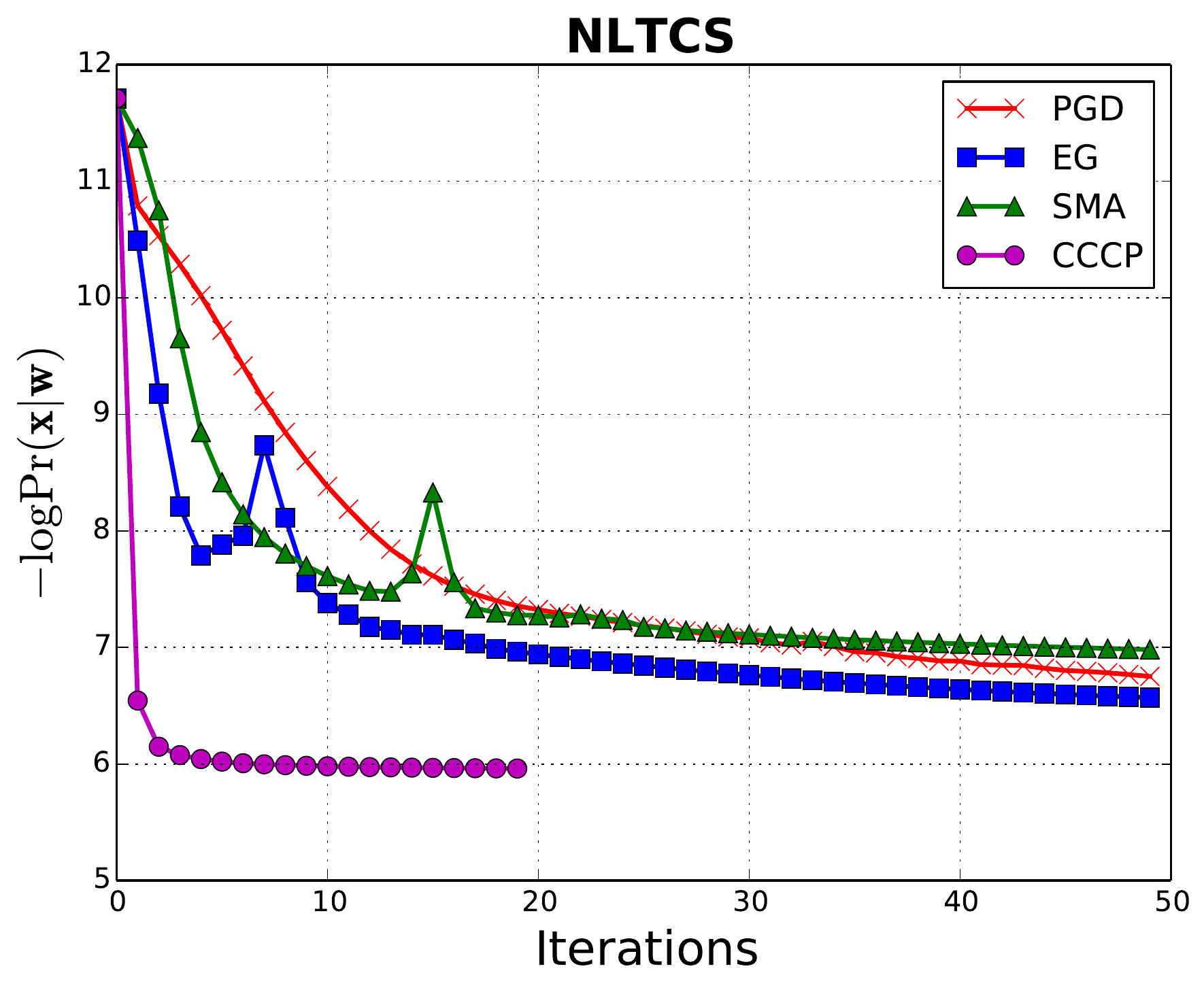}
\end{subfigure}
~
\begin{subfigure}[b]{0.18\textwidth}
	\includegraphics[width=\textwidth]{./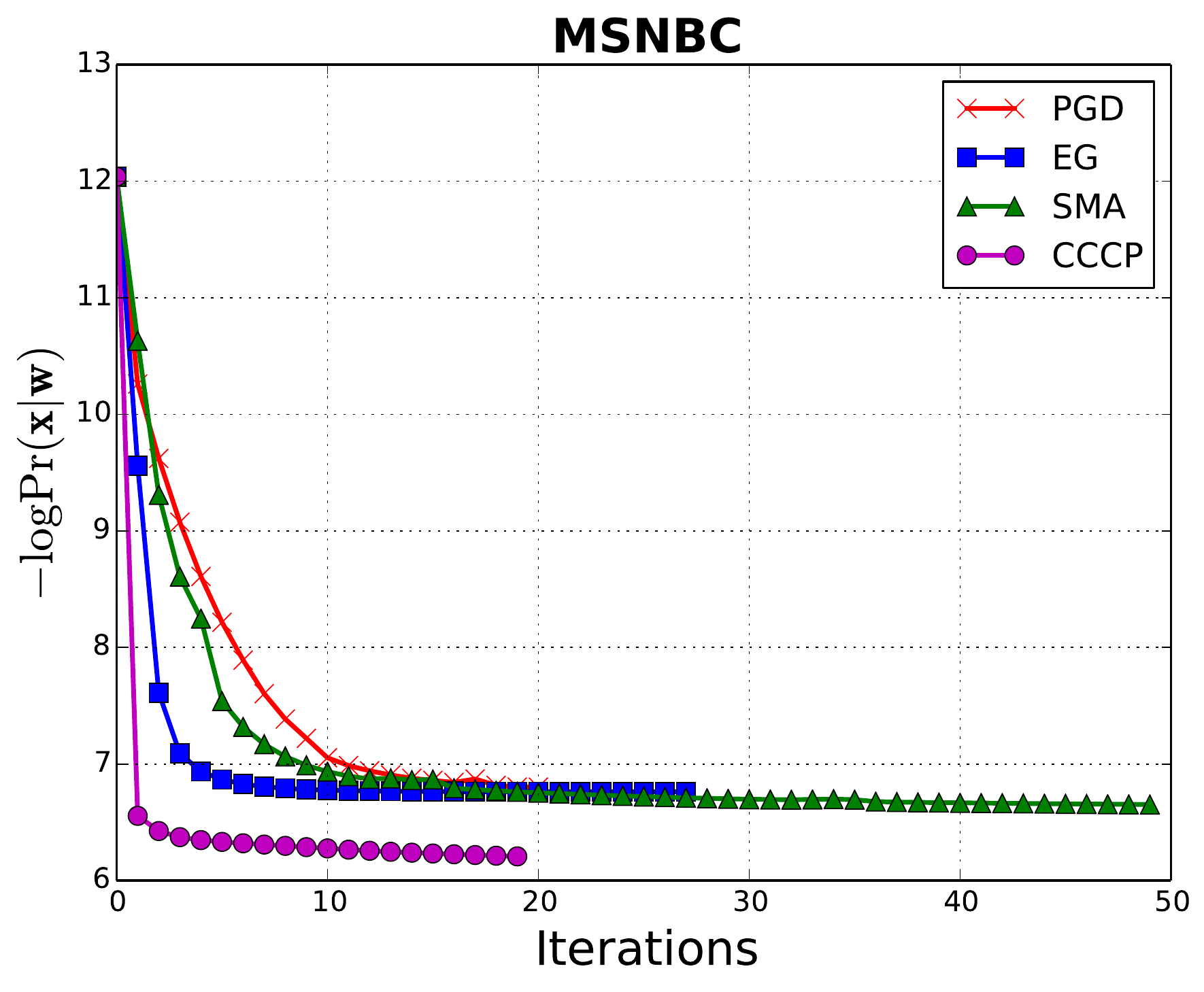}
\end{subfigure}
~
\begin{subfigure}[b]{0.18\textwidth}
	\includegraphics[width=\textwidth]{./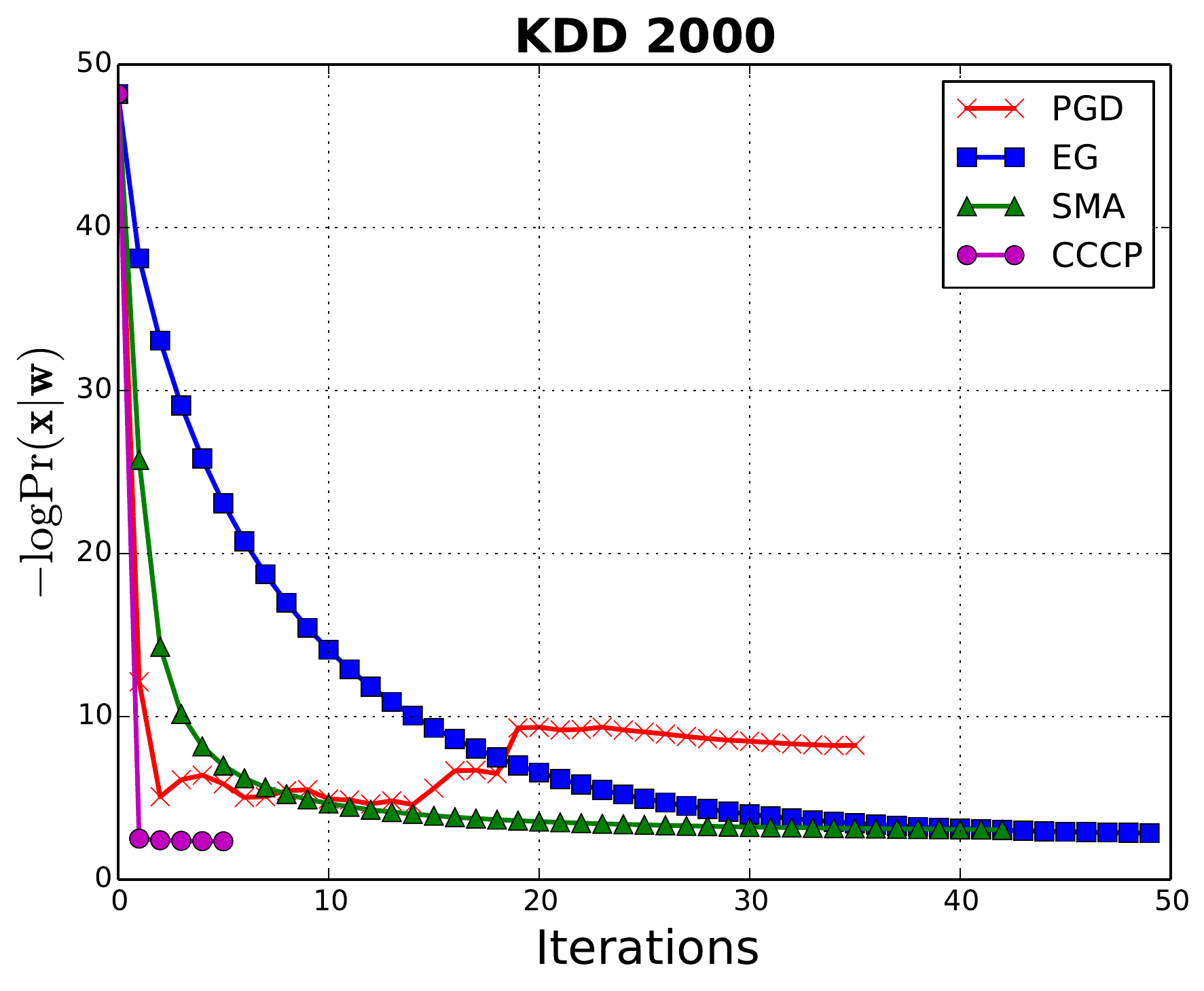}
\end{subfigure}
~
\begin{subfigure}[b]{0.18\textwidth}
	\includegraphics[width=\textwidth]{./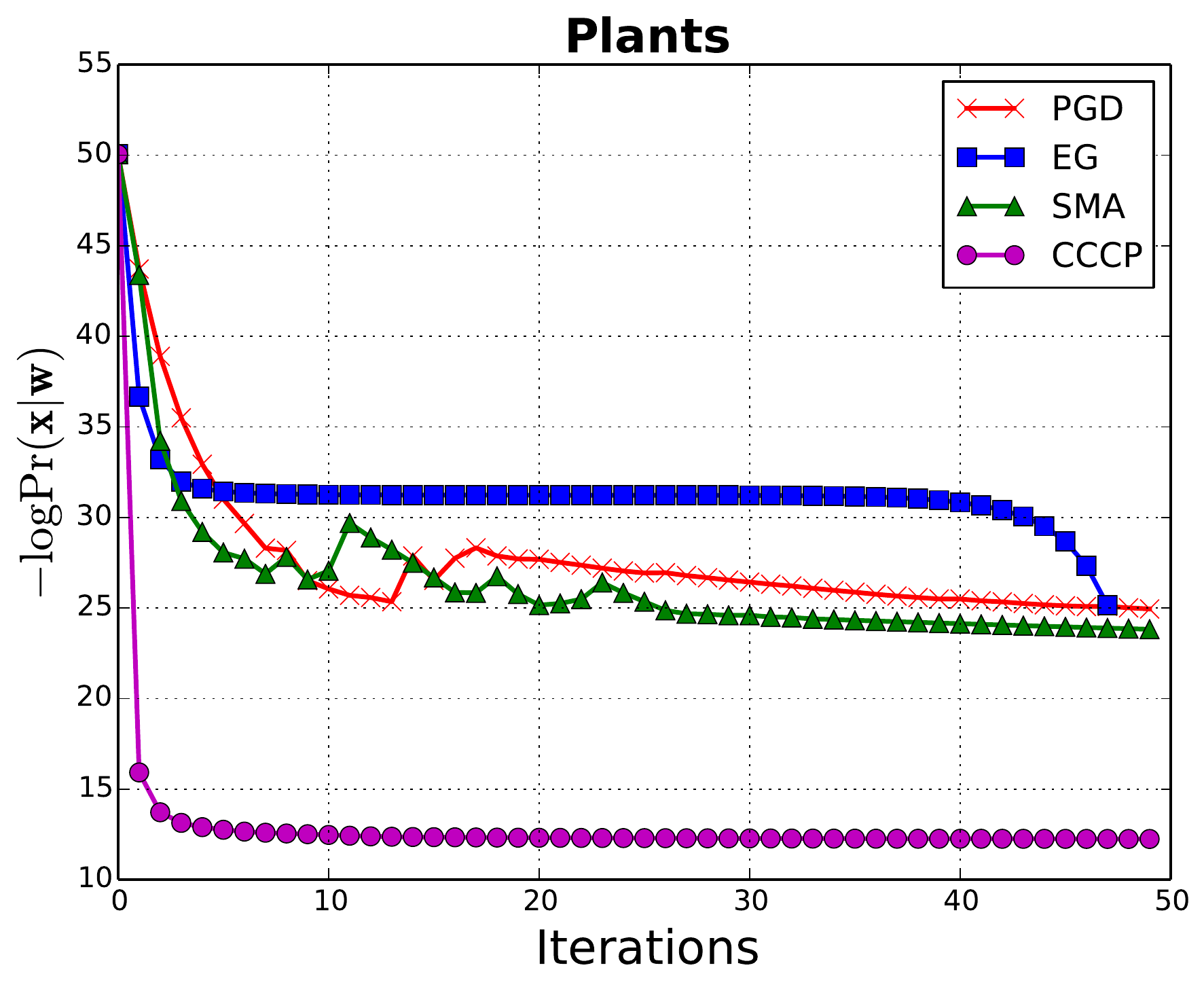}
\end{subfigure}
~
\begin{subfigure}[b]{0.18\textwidth}
	\includegraphics[width=\textwidth]{./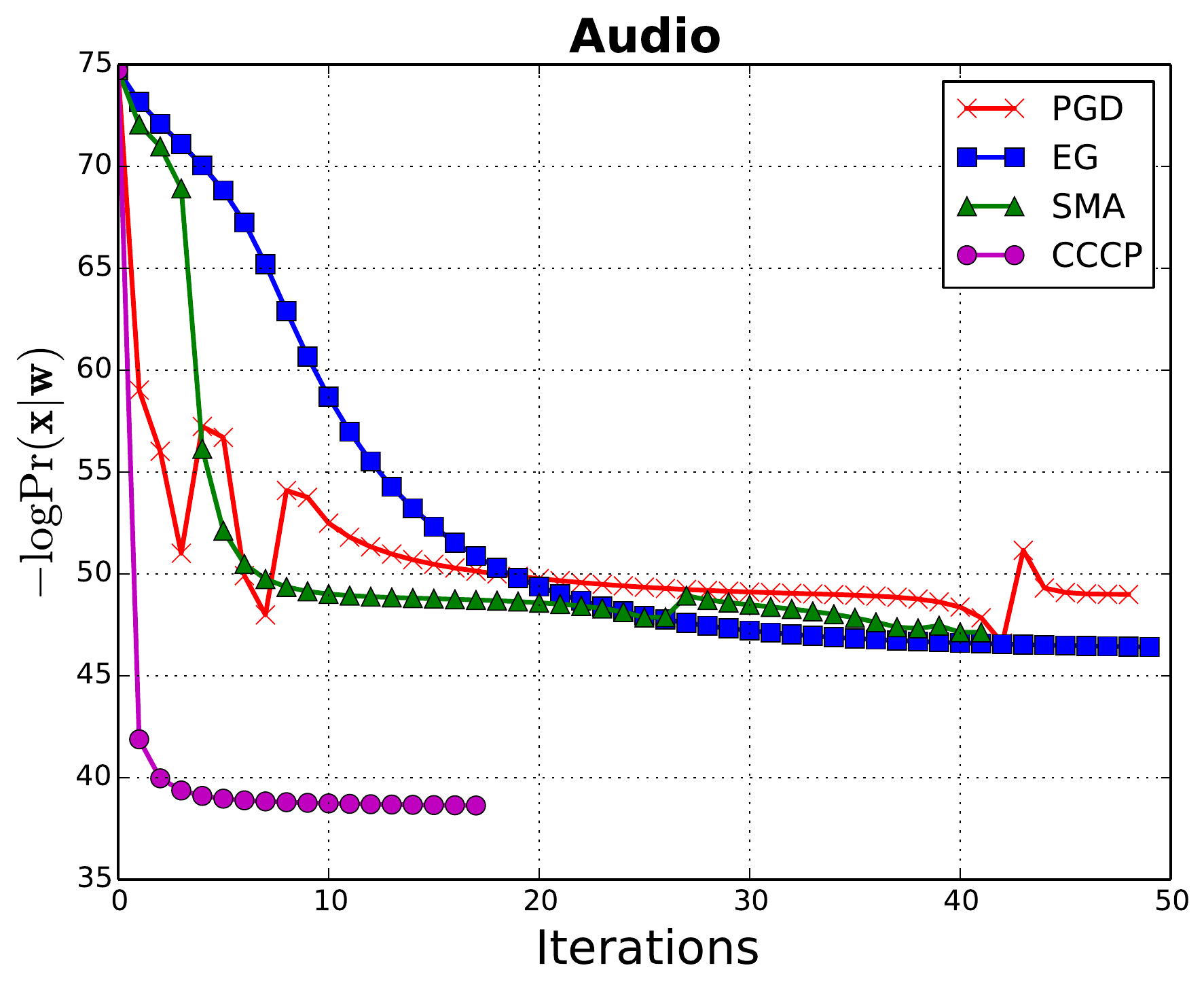}
\end{subfigure}
~
\begin{subfigure}[b]{0.18\textwidth}
	\includegraphics[width=\textwidth]{./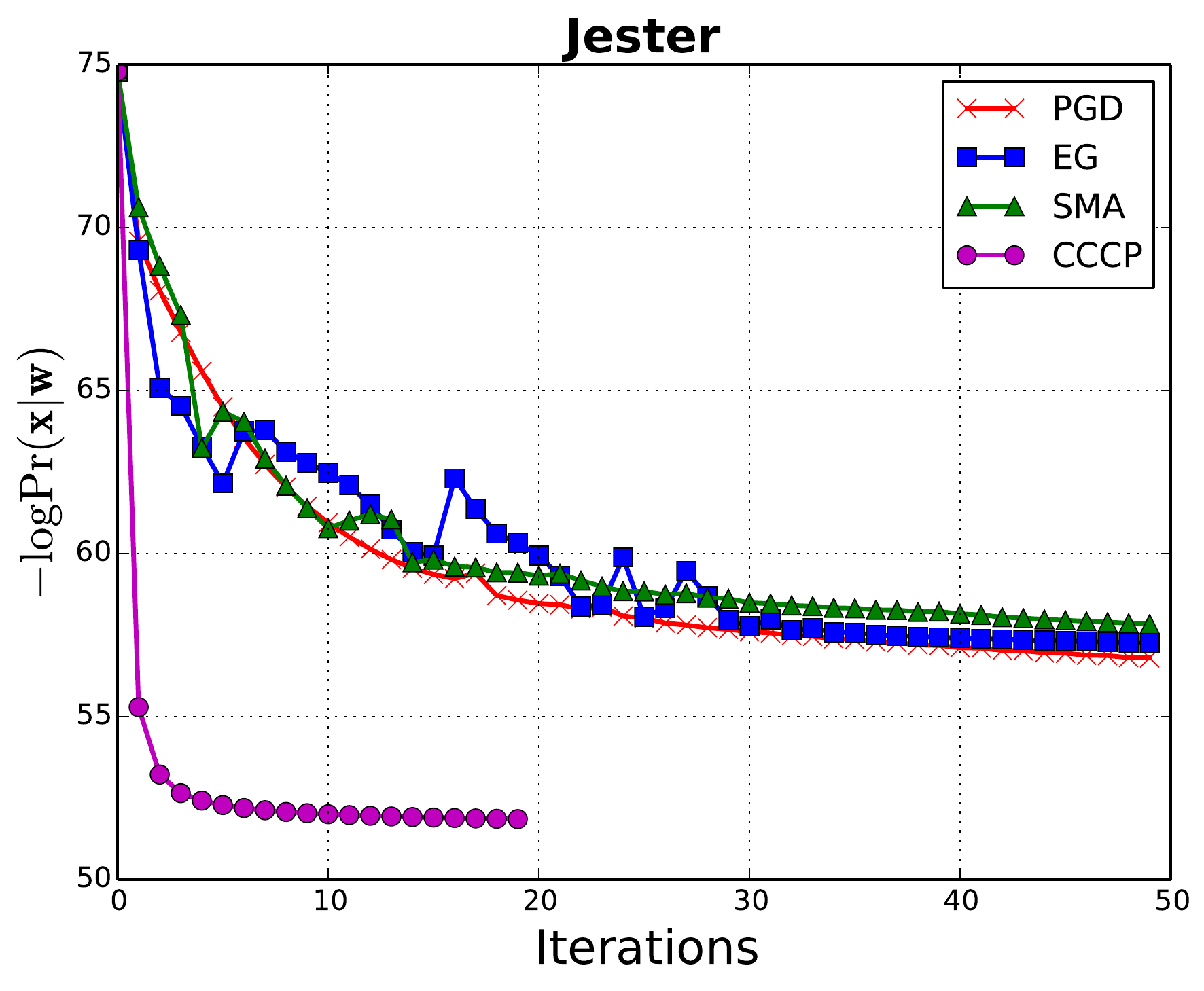}
\end{subfigure}
~
\begin{subfigure}[b]{0.18\textwidth}
	\includegraphics[width=\textwidth]{./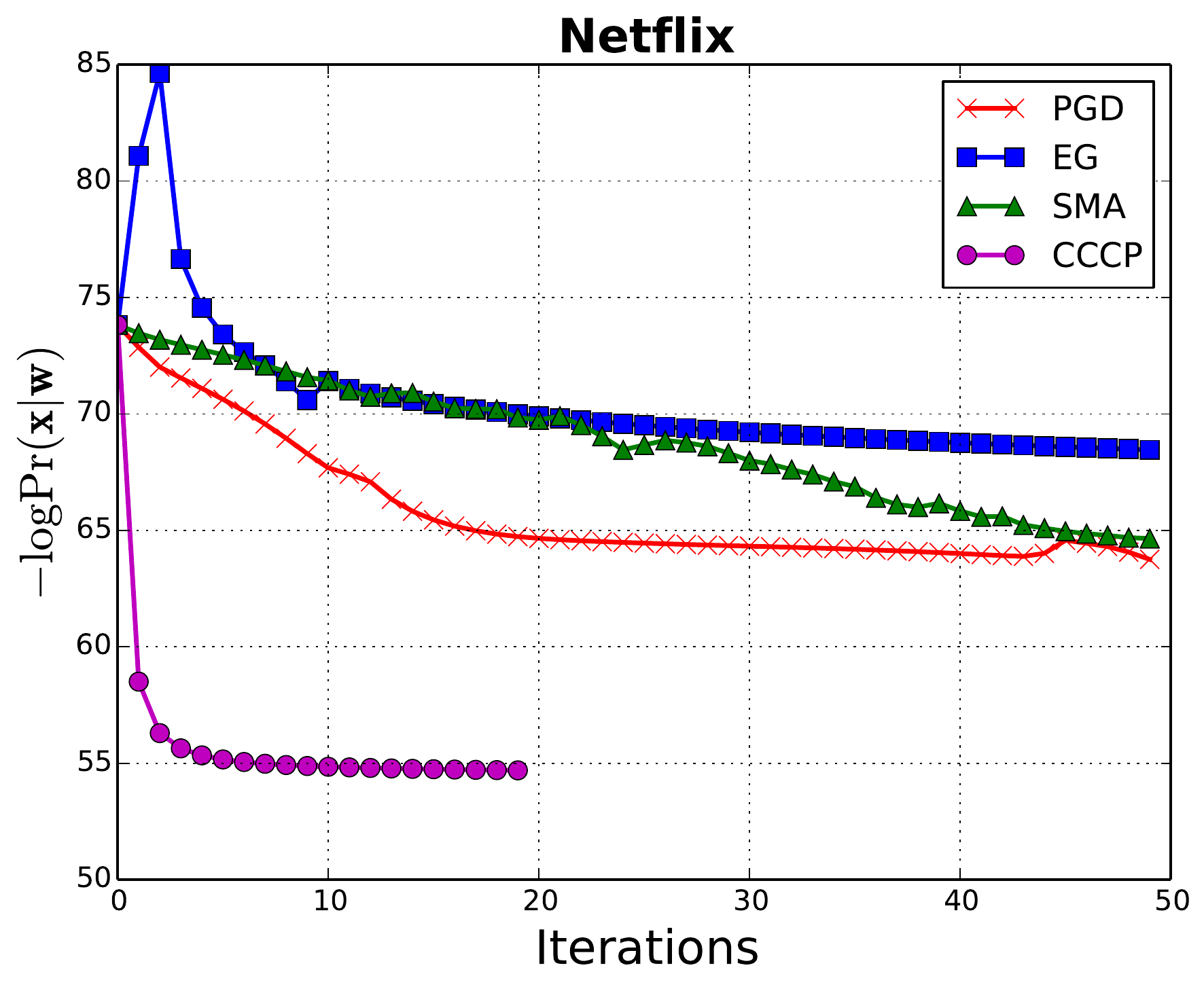}
\end{subfigure}
~
\begin{subfigure}[b]{0.18\textwidth}
	\includegraphics[width=\textwidth]{./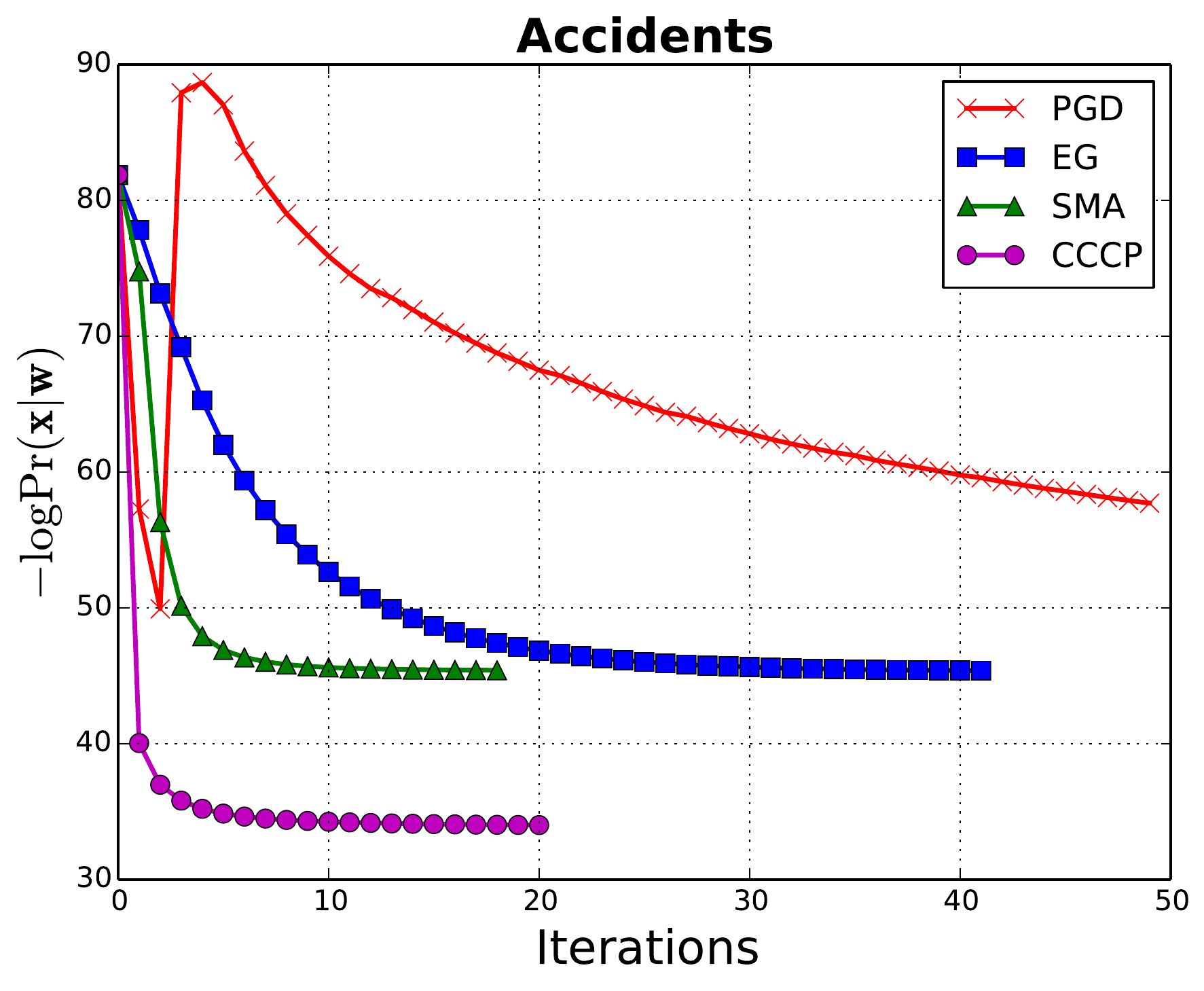}
\end{subfigure}
~
\begin{subfigure}[b]{0.18\textwidth}
	\includegraphics[width=\textwidth]{./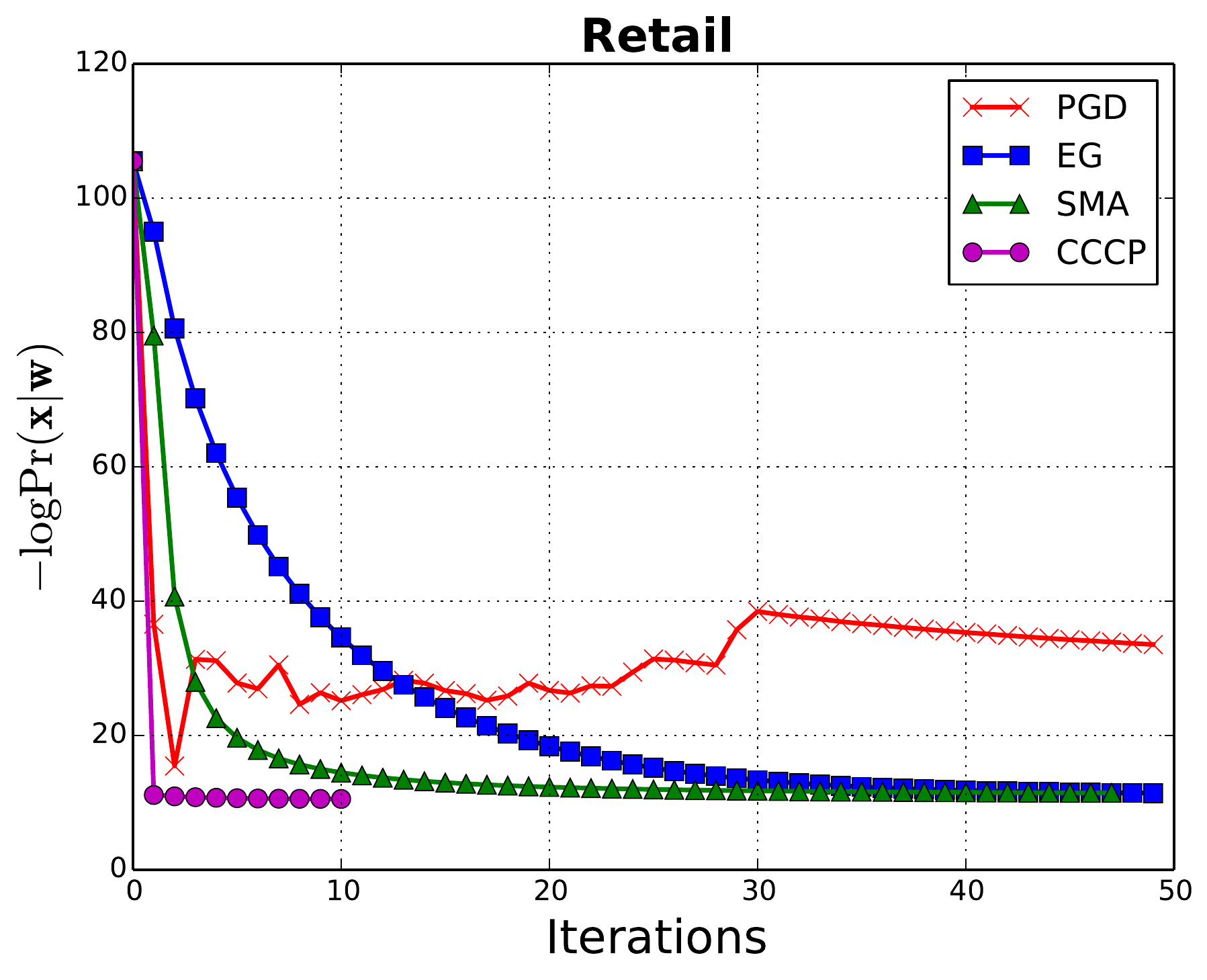}
\end{subfigure}
~
\begin{subfigure}[b]{0.18\textwidth}
	\includegraphics[width=\textwidth]{./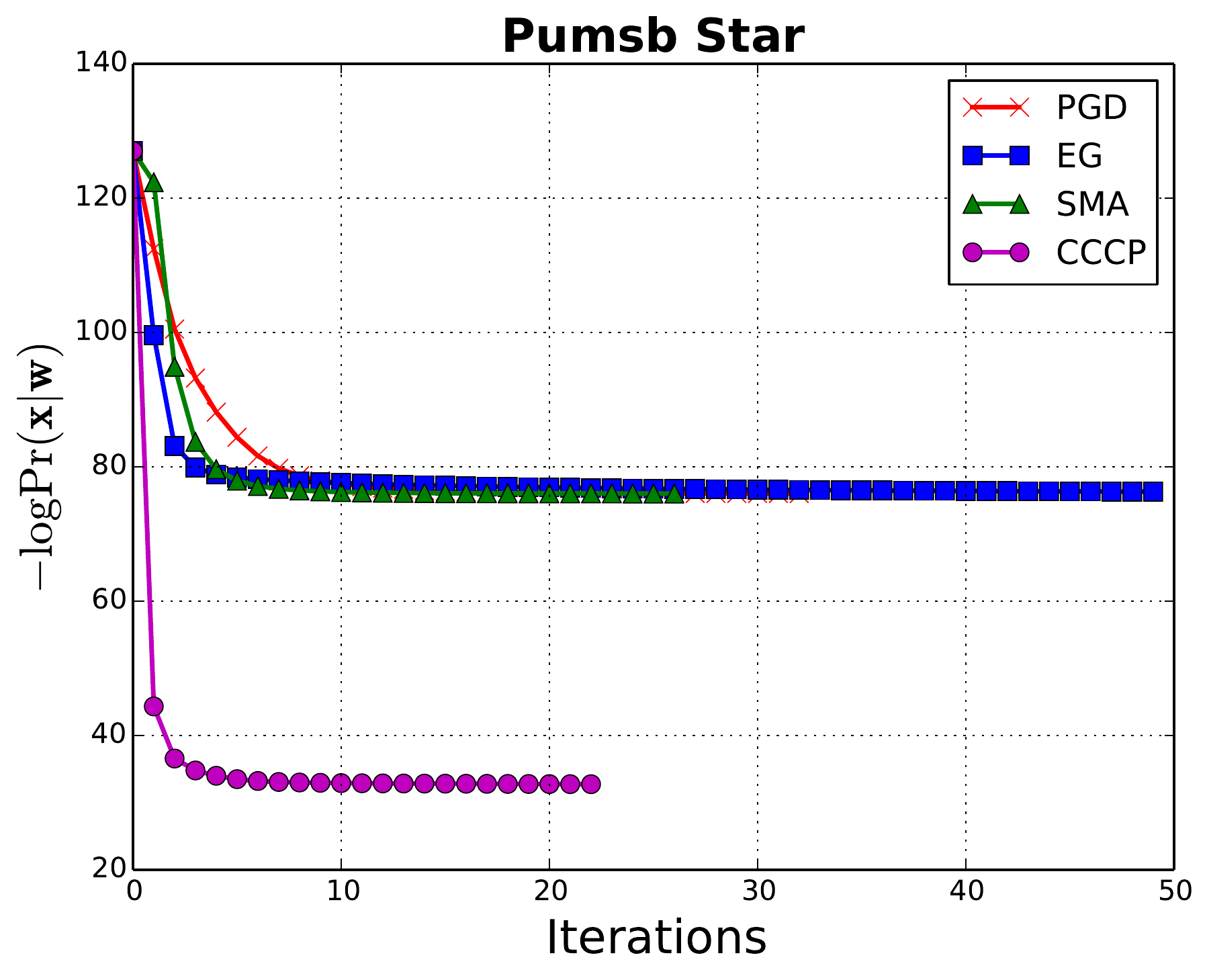}
\end{subfigure}
~
\begin{subfigure}[b]{0.18\textwidth}
	\includegraphics[width=\textwidth]{./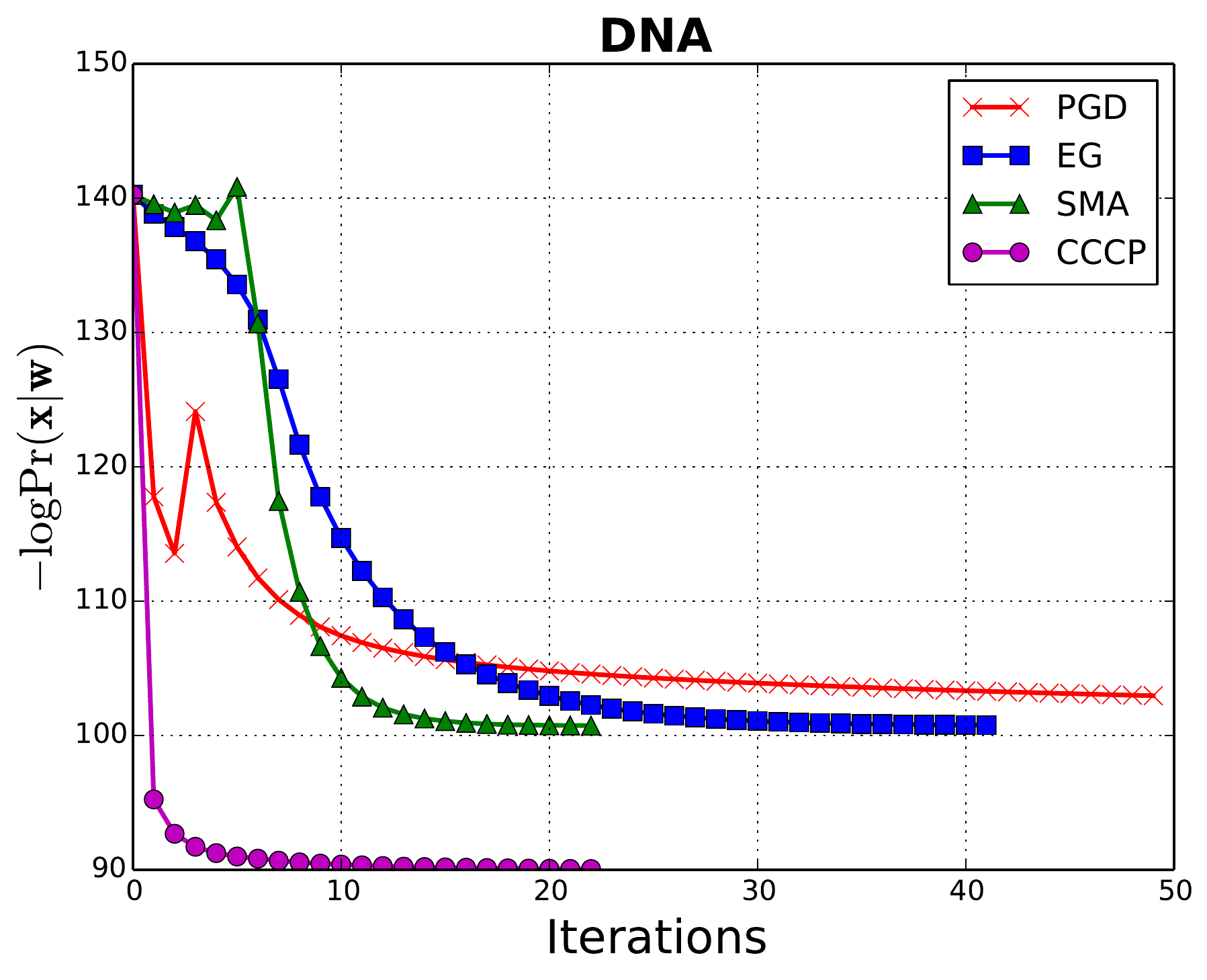}
\end{subfigure}
~
\begin{subfigure}[b]{0.18\textwidth}
	\includegraphics[width=\textwidth]{./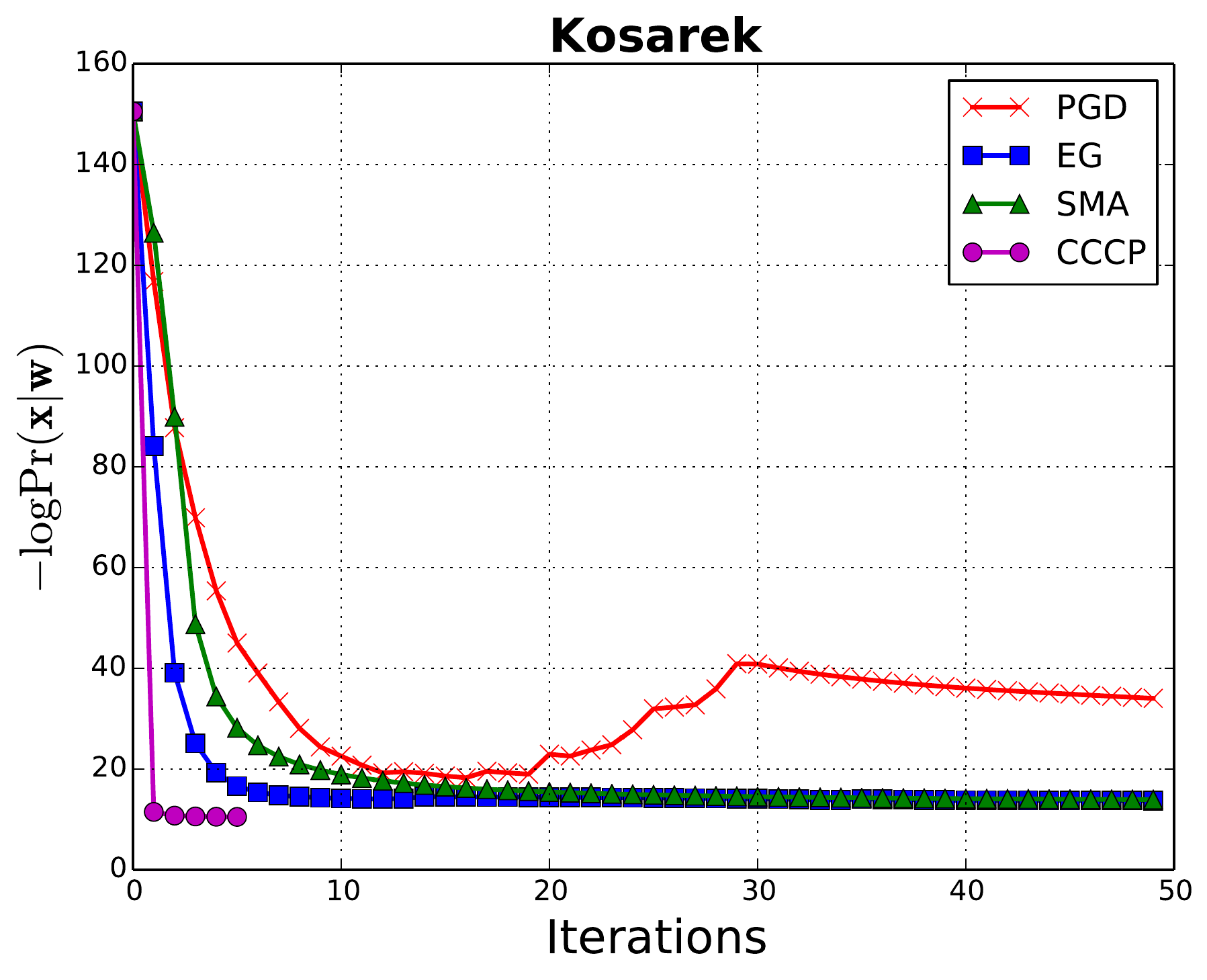}
\end{subfigure}
~
\begin{subfigure}[b]{0.18\textwidth}
	\includegraphics[width=\textwidth]{./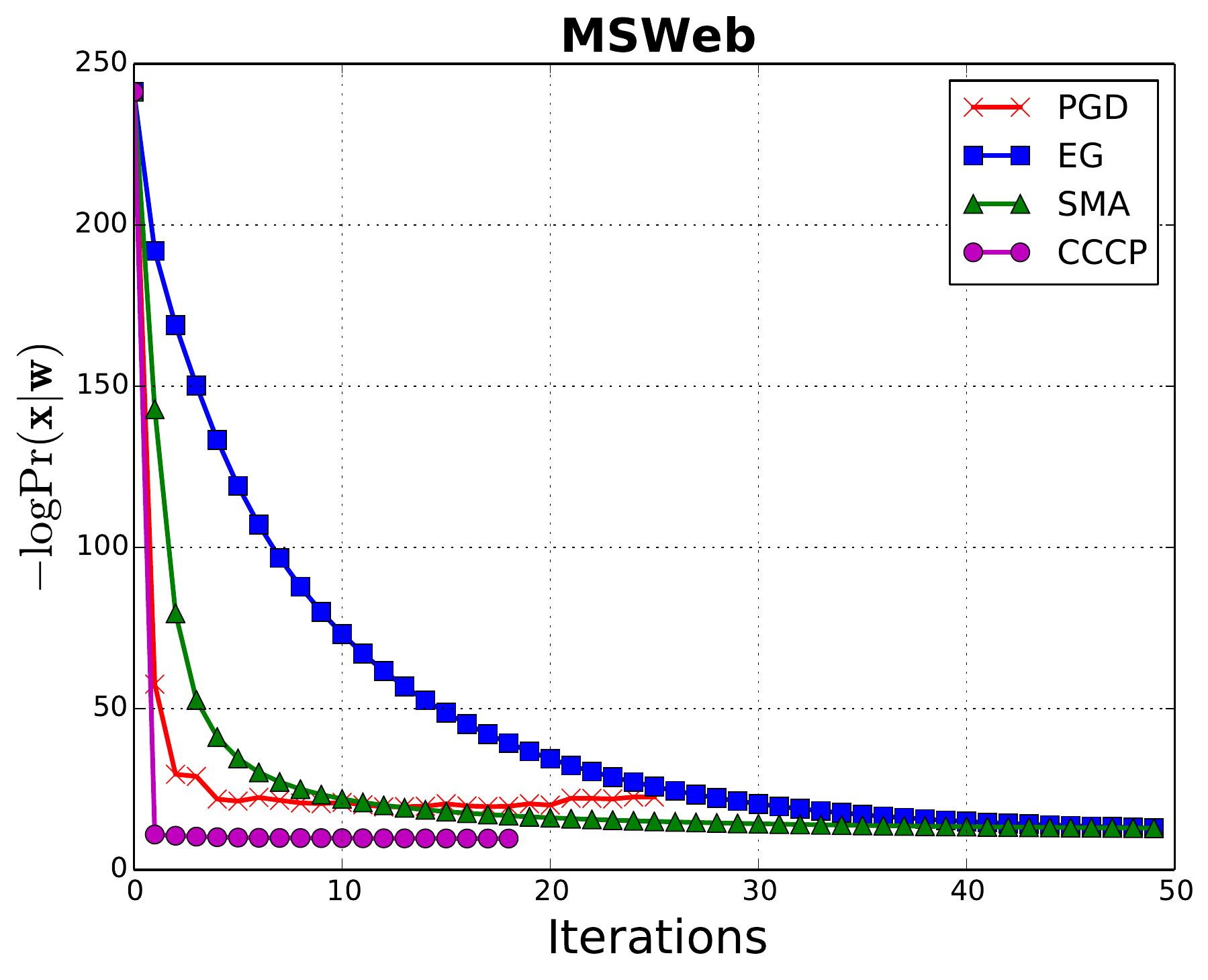}
\end{subfigure}
~
\begin{subfigure}[b]{0.18\textwidth}
	\includegraphics[width=\textwidth]{./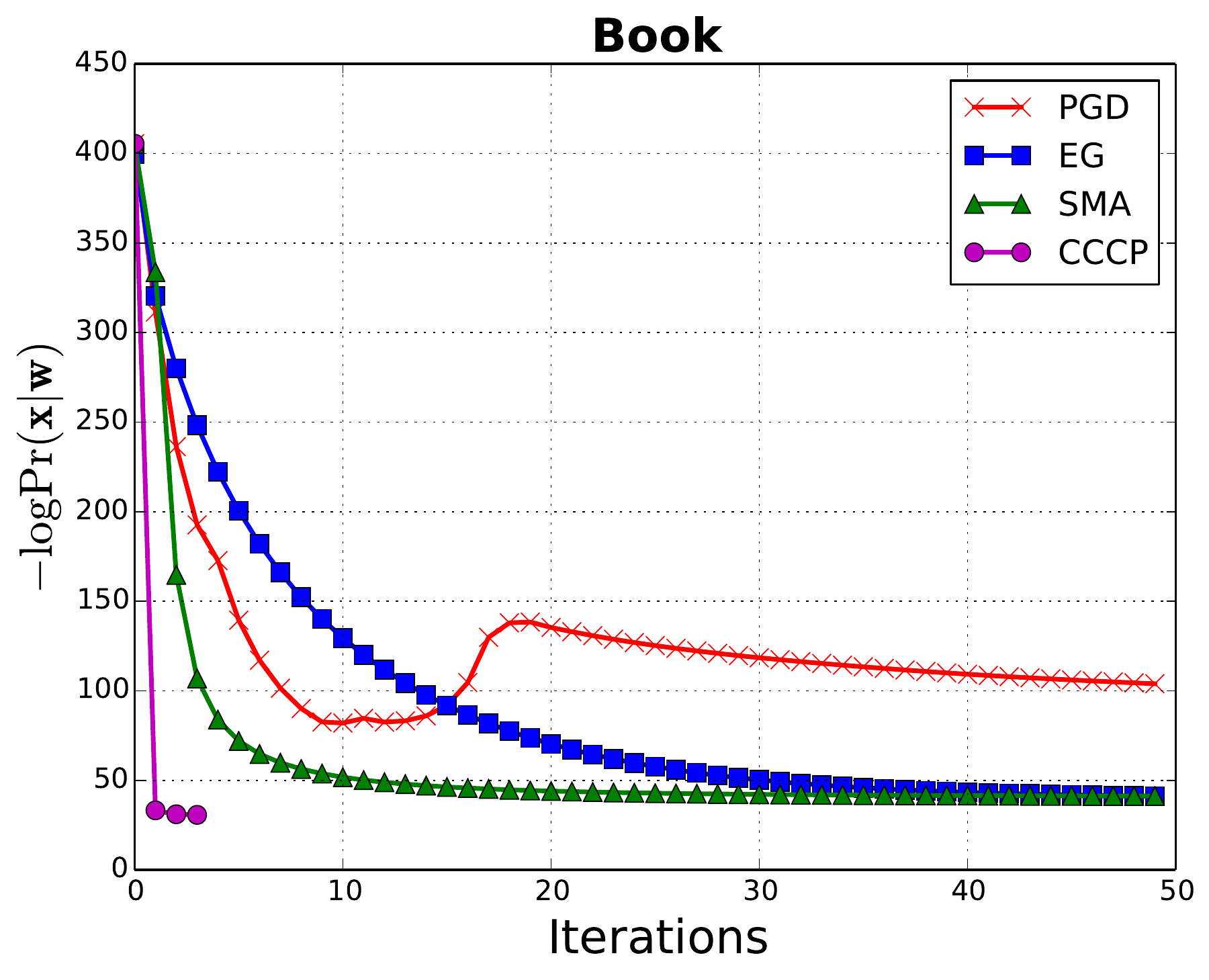}
\end{subfigure}
~
\begin{subfigure}[b]{0.18\textwidth}
	\includegraphics[width=\textwidth]{./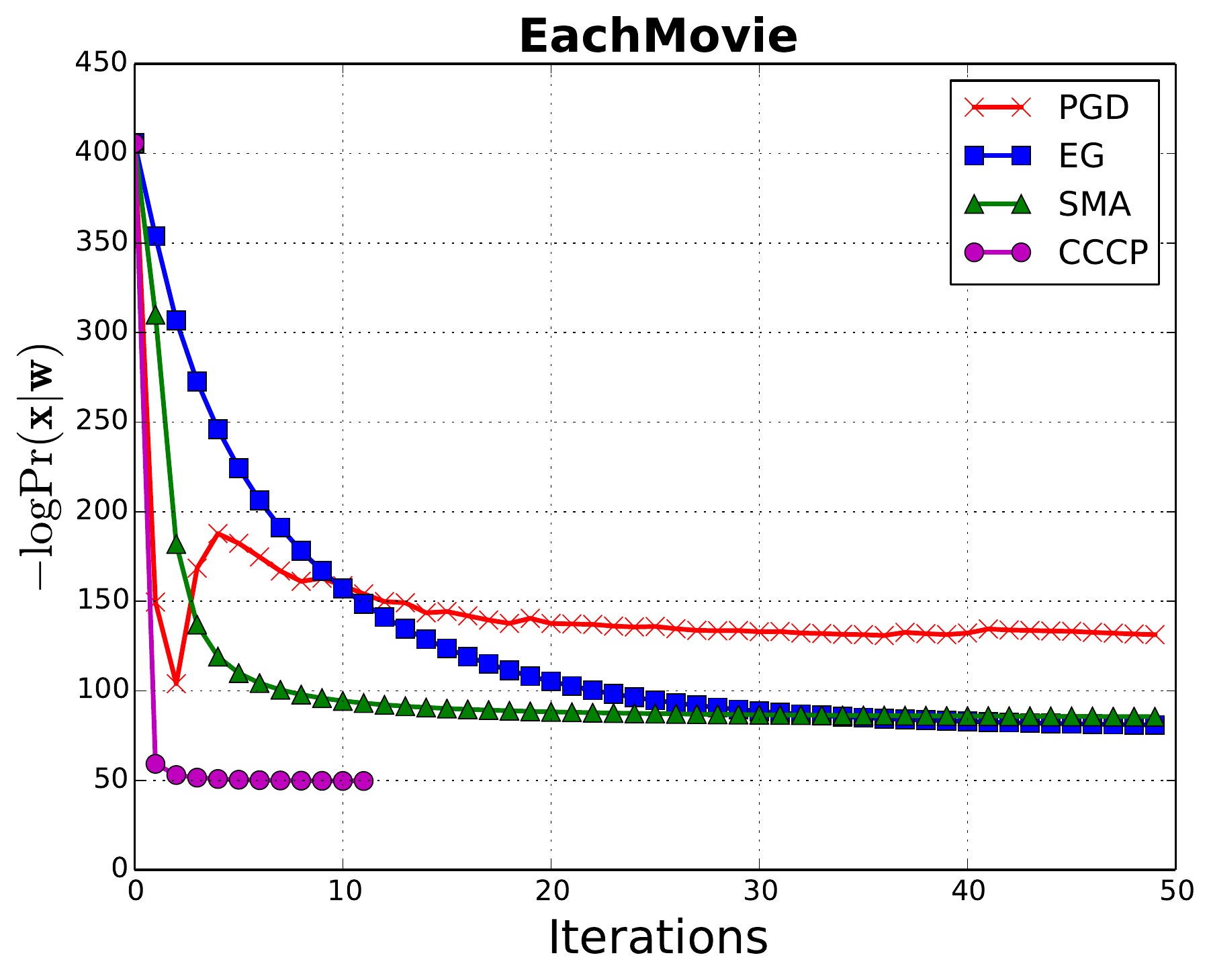}
\end{subfigure}
~
\begin{subfigure}[b]{0.18\textwidth}
	\includegraphics[width=\textwidth]{./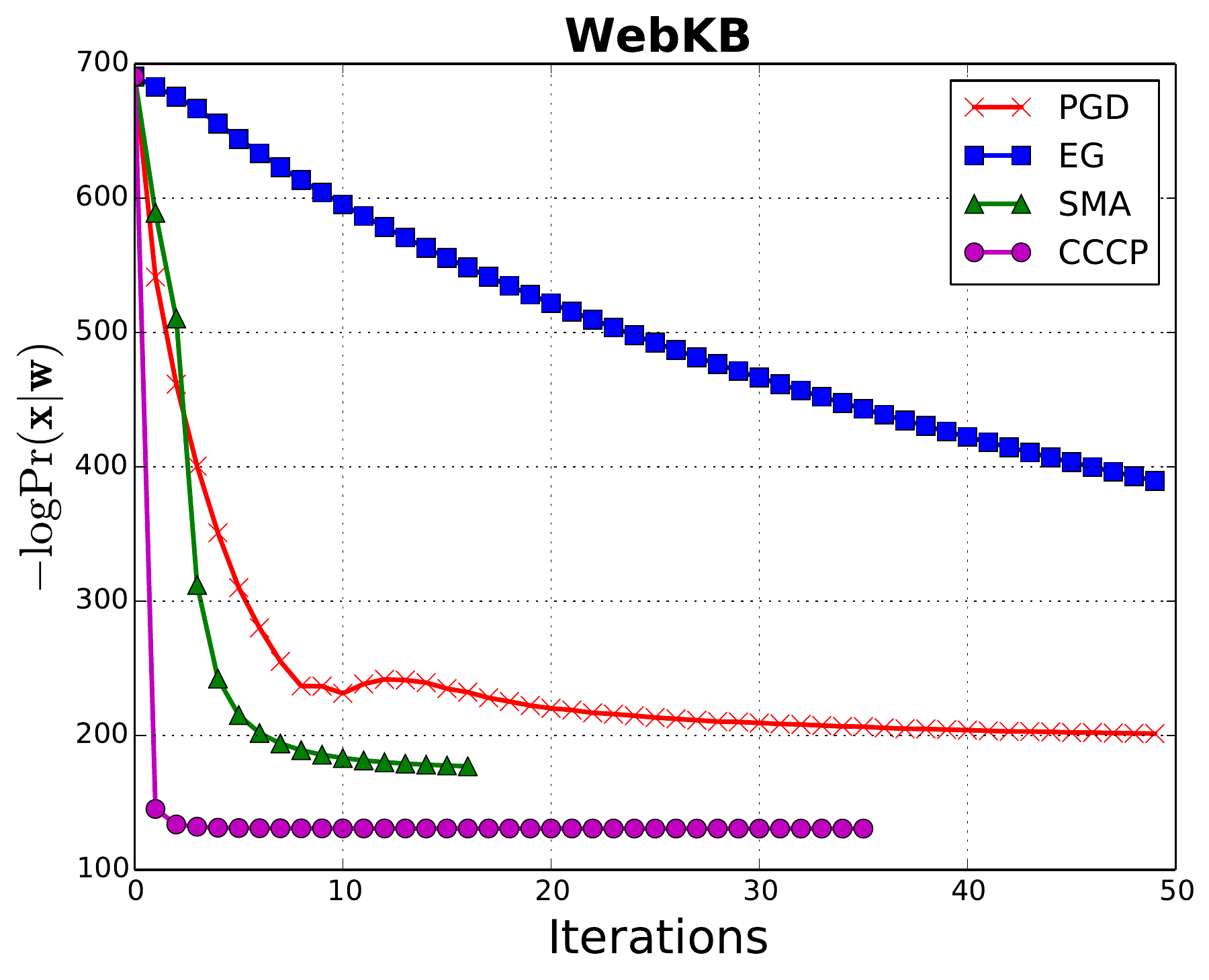}
\end{subfigure}
~
\begin{subfigure}[b]{0.18\textwidth}
	\includegraphics[width=\textwidth]{./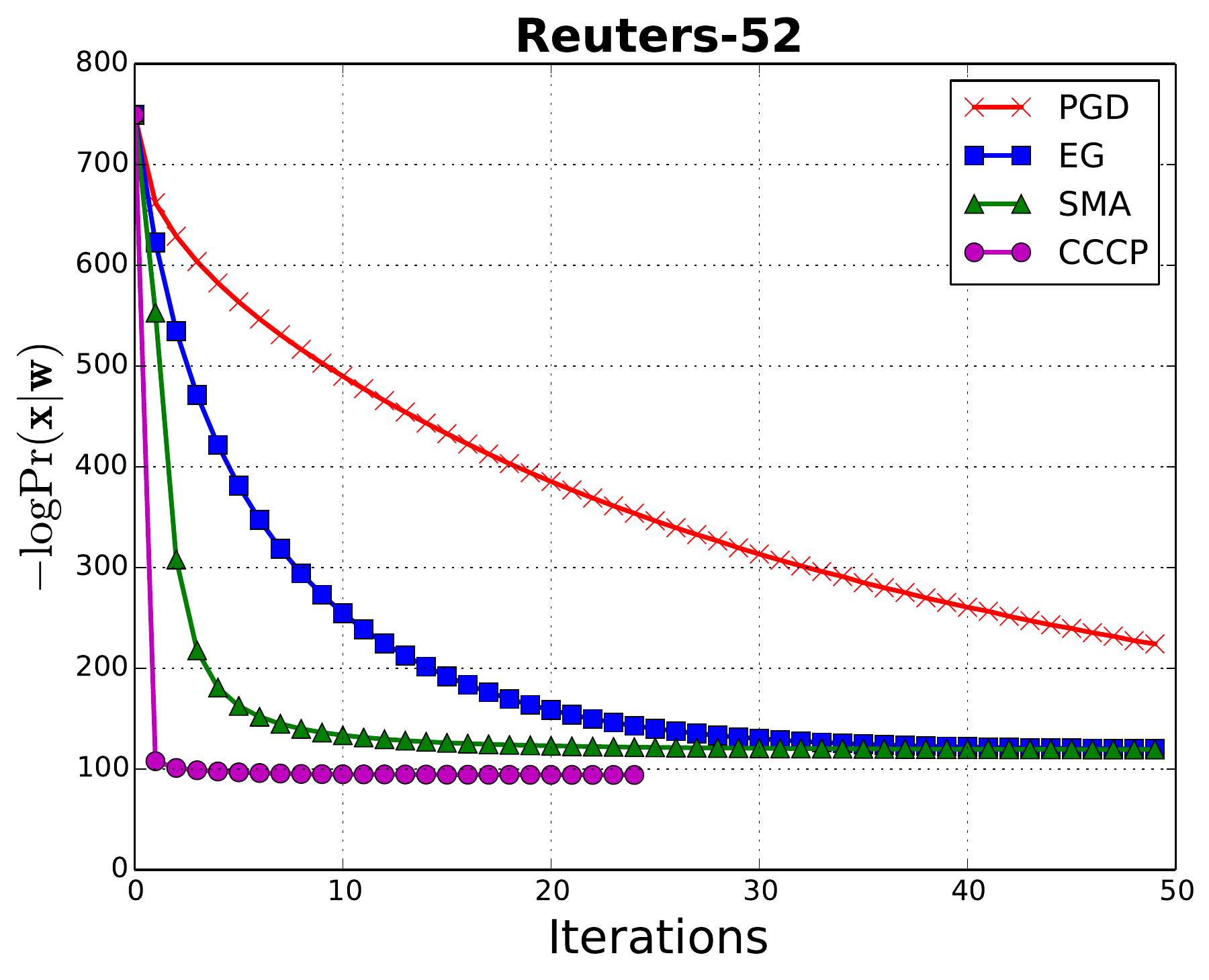}
\end{subfigure}
~
\begin{subfigure}[b]{0.18\textwidth}
	\includegraphics[width=\textwidth]{./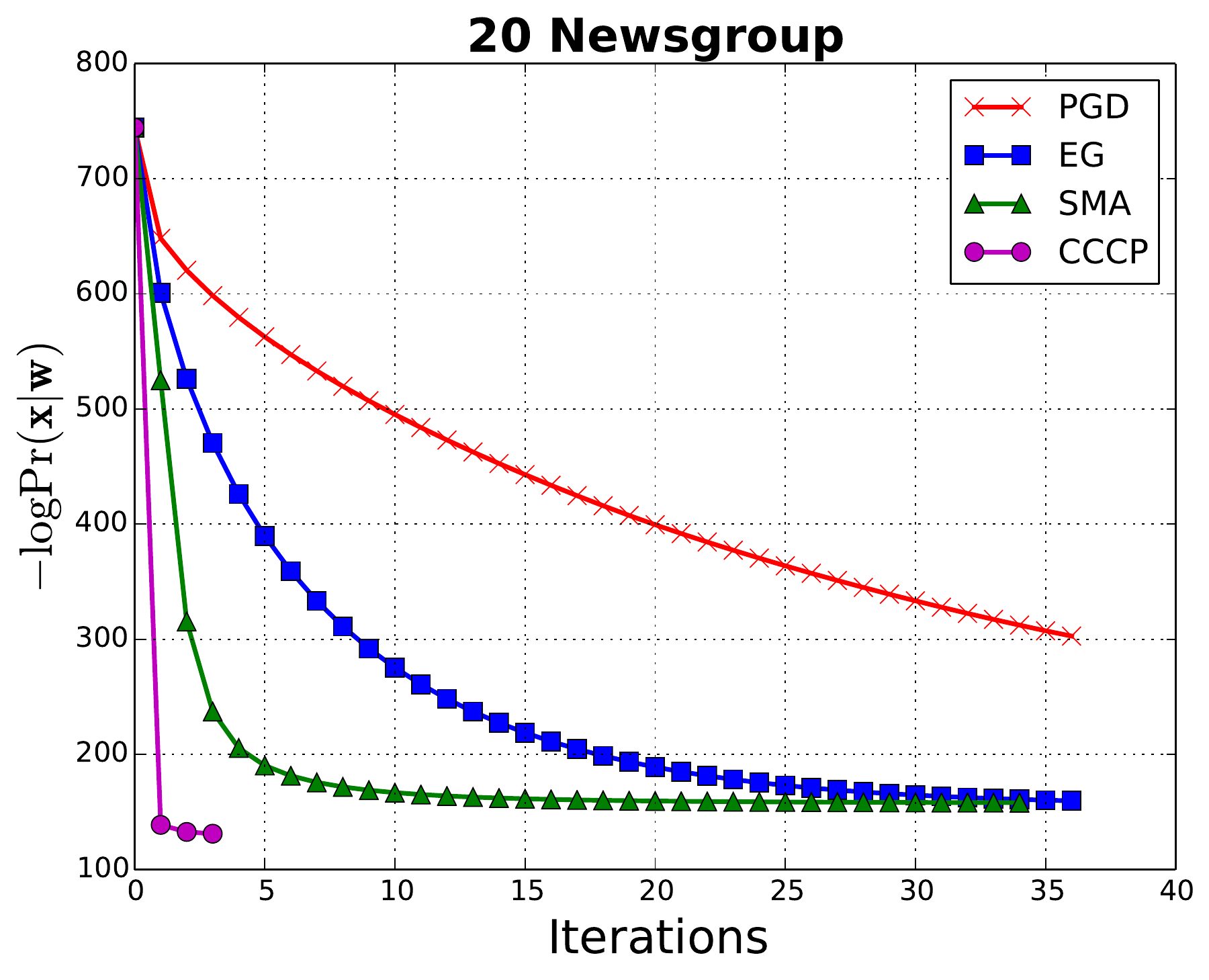}
\end{subfigure}
~
\begin{subfigure}[b]{0.18\textwidth}
	\includegraphics[width=\textwidth]{./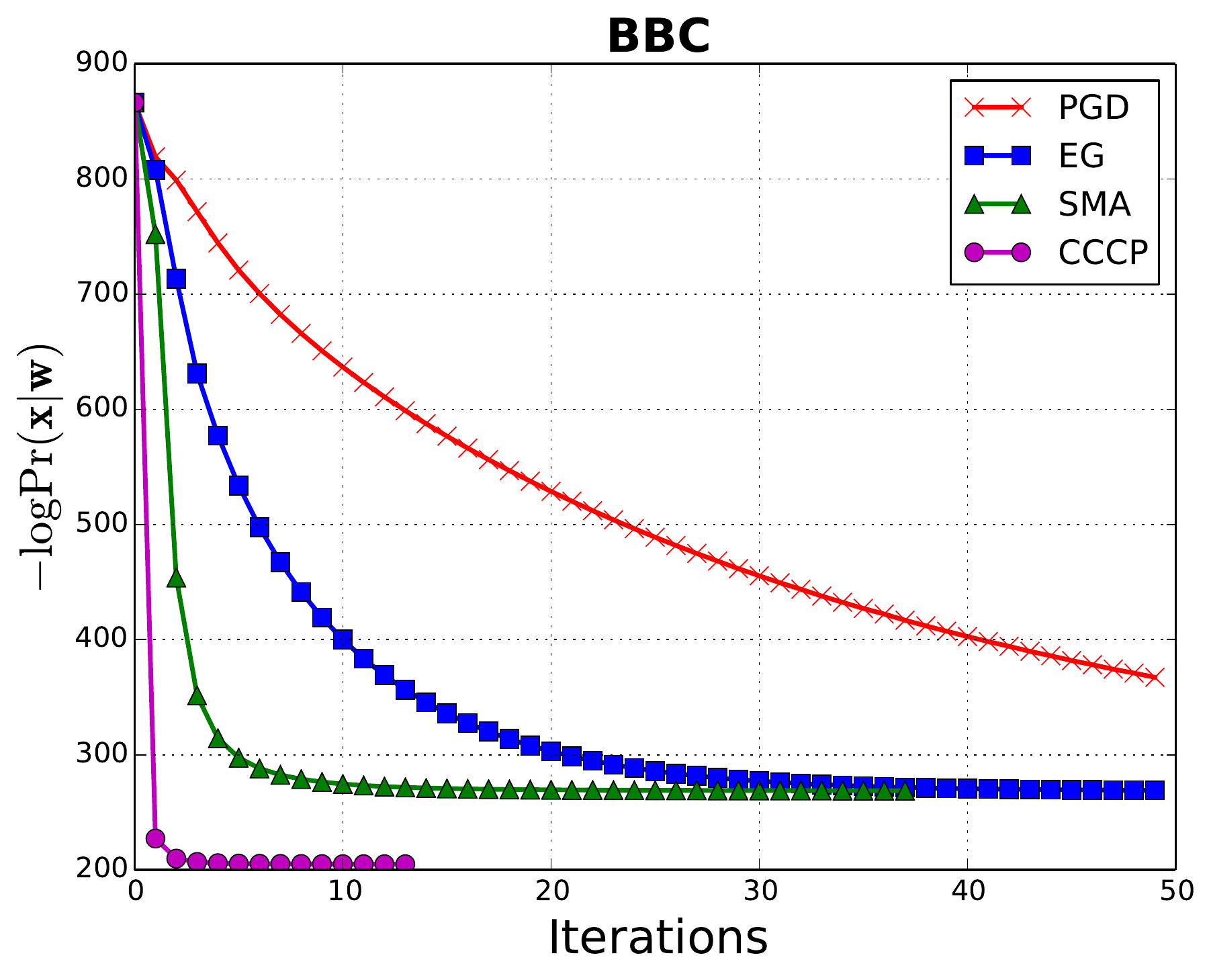}
\end{subfigure}
~
\begin{subfigure}[b]{0.18\textwidth}
	\includegraphics[width=\textwidth]{./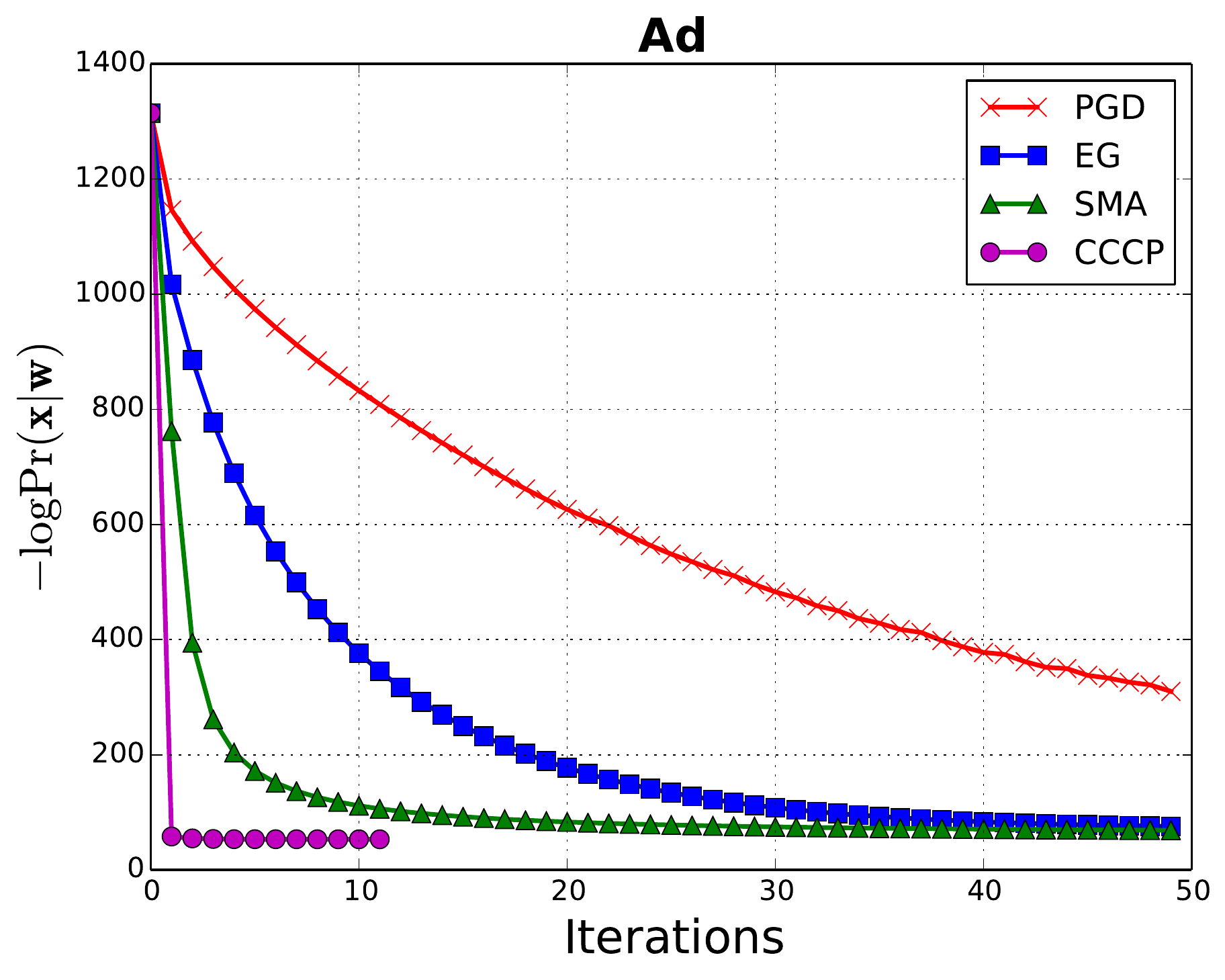}
\end{subfigure}
\caption{Negative log-likelihood values on training data versus number of iterations for PGD, EG, SMA and CCCP.}
\label{fig:lld}
\end{figure*}

The computational complexity per update is $O(|\mathcal{S}|)$ in all four algorithms. The constant involved in the $|\mathcal{S}|$ term of CCCP is slightly larger than those of the other three algorithms as there are more $\exp(\cdot)$ calls in CCCP\@. However, in practice, CCCP often takes less time than the other three algorithms because it takes fewer iterations to converge.
We list detailed running time statistics for all four algorithms on the 20 data sets in the supplementary material.

\subsection{Fine Tuning}
We combine CCCP as a ``fine tuning'' procedure with the structure learning algorithm LearnSPN and compare it to the state-of-the-art structure learning algorithm ID-SPN~\citep{rooshenas2014learning}. More concretely, we keep the model parameters learned from LearnSPN and use them to initialize CCCP. We then update the model parameters globally using CCCP as a fine tuning technique. This normally helps to obtain a better generative model since the original parameters are learned greedily and locally during the structure learning algorithm. We use the validation set log-likelihood score to avoid overfitting. The algorithm returns the set of parameters that achieve the best validation set log-likelihood score as output. For LearnSPN and ID-SPN, we use their publicly available implementations provided by the original authors and the default hyperparameter settings. Experimental results are reported in Table.~\ref{table:log-likelihood}. As shown in Table~\ref{table:log-likelihood}, the use of CCCP after LearnSPN always helps to improve the model performance. By optimizing model parameters on these 20 data sets, we boost LearnSPN to achieve better results than state-of-the-art ID-SPN on 7 data sets, where the original LearnSPN only outperforms ID-SPN on 1 data set. Note that the sizes of the SPNs returned by LearnSPN are much smaller than those produced by ID-SPN. Hence, it is remarkable that by fine tuning the parameters with CCCP, we can achieve better performance despite the fact that the models are smaller.  For a fair comparison, we also list the size of the SPNs returned by ID-SPN in the supplementary material. 

\section{Conclusion}
We show that the network polynomial of an SPN is a posynomial function of the model parameters, and that learning the parameter by maximum likelihood yields a signomial program. We propose two convex relaxations to solve the SP\@. We analyze the convergence properties of CCCP for learning SPNs. Extensive experiments are conducted to evaluate the proposed approaches and current methods. We also recommend combining CCCP with current structure learning algorithms to boost the modeling accuracy. 

\subsubsection*{Acknowledgments}
HZ and GG gratefully acknowledge support from ONR contract N000141512365. HZ also thanks Ryan Tibshirani for the helpful discussion about CCCP.

\bibliography{reference}
\bibliographystyle{abbrvnat}

\newpage
\begin{appendices}
\section{Proof of SPNs as Mixture of Trees}
\treespn*
\begin{proof}
Argue by contradiction that $\mathcal{T}$ is not a tree, then there must exist a node $v\in\mathcal{T}$ such that $v$ has more than one parent in $\mathcal{T}$. This means that there exist at least two paths $R, p_1, \ldots, v$ and $R, q_1, \ldots, v$ that connect the root of $\mathcal{S}(\mathcal{T})$, which we denote by $R$, and $v$. Let $t$ be the last node in $R, p_1, \ldots, v$ and $R, q_1, \ldots, v$ such that $R, \ldots, t$ are common prefix of these two paths. By construction we know that such $t$ must exist since these two paths start from the same root node $R$ ($R$ will be one candidate of such $t$). Also, we claim that $t\neq v$ otherwise these two paths overlap with each other, which contradicts the assumption that $v$ has multiple parents. This shows that these two paths can be represented as $R, \ldots, t, p, \ldots, v$ and $R, \ldots, t, q, \ldots, v$ where $R, \ldots, t$ are the common prefix shared by these two paths and $p\neq q$ since $t$ is the last common node. From the construction process defined in Def.~\ref{def:treespn}, we know that both $p$ and $q$ are children of $t$ in $\mathcal{S}$. Recall that for each sum node in $\mathcal{S}$, Def.~\ref{def:treespn} takes at most one child, hence we claim that $t$ must be a product node, since both $p$ and $q$ are children of $t$. Then the paths that $t\rightarrow p\leadsto v$ and $t\rightarrow q\leadsto v$ indicate that $\text{scope}(v)\subseteq \text{scope}(p)\subseteq\text{scope}(t)$ and $\text{scope}(v)\subseteq \text{scope}(q)\subseteq\text{scope}(t)$, leading to $\varnothing\neq \text{scope}(v)\subseteq \text{scope}(p)\cap\text{scope}(q)$, which is a contradiction of the decomposability of the product node $t$. Hence as long as $\mathcal{S}$ is complete and decomposable, $\mathcal{T}$ must be a tree.

The completeness of $\mathcal{T}$ is trivially satisfied because each sum node has only one child in $\mathcal{T}$. It is also straightforward to verify that $\mathcal{T}$ satisfies the decomposability as $\mathcal{T}$ is an induced subgraph of $\mathcal{S}$, which is decomposable.
\end{proof}
\sp*
\begin{proof}
First, the scope of $\mathcal{T}$ is the same as the scope of $\mathcal{S}$ because the root of $\mathcal{S}$ is also the root of $\mathcal{T}$. This shows that for each $X_i$ there is at least one indicator $\mathbb{I}_{x_i}$ in the leaves otherwise the scope of the root node of $\mathcal{T}$ will be a strict subset of the scope of the root node of $\mathcal{S}$. Furthermore, for each variable $X_i$ there is at most one indicator $\mathbb{I}_{x_i}$ in the leaves. This is observed by the fact that there is at most one child collected from a sum node into $\mathcal{T}$ and if $\mathbb{I}_{x_i}$ and $\mathbb{I}_{\bar{x}_i}$ appear simultaneously in the leaves, then their least common ancestor must be a product node. Note that the least common ancestor of $\mathbb{I}_{x_i}$ and $\mathbb{I}_{\bar{x}_i}$ is guaranteed to exist because of the tree structure of $\mathcal{T}$. However, this leads to a contradiction of the fact that $\mathcal{S}$ is decomposable. As a result, there is exactly one indicator $\mathbb{I}_{x_i}$ for each variable $X_i$ in $\mathcal{T}$. Hence the multiplicative constant of the monomial admits the form $\prod_{i=1}^n\mathbb{I}_{x_i}$, which is a product of univariate distributions. More specifically, it is a product of indicator variables in the case of Boolean input variables. 

We have already shown that $\mathcal{T}$ is a tree and only product nodes in $\mathcal{T}$ can have multiple children. It follows that the functional form of $f_{\mathcal{T}}(\mathbf{x})$ must be a monomial, and only edge weights that are in $\mathcal{T}$ contribute to the monomial. Combing all the above, we know that $f_{\mathcal{T}}(\mathbf{x}) = \prod_{(v_i, v_i)\in\mathcal{T}_E}w_{ij}\prod_{n=1}^N\mathbb{I}_{x_n}$.
\end{proof}
\ccount*
\polynomial*
\begin{proof}
We prove by induction on the height of $\mathcal{S}$. If the height of $\mathcal{S}$ is 2, then depending on the type of the root node, we have two cases:
\begin{enumerate}
	\item 	If the root is a sum node with $K$ children, then there are $C_K^1 = K$ different subgraphs that satisfy Def.~\ref{def:treespn}, which is exactly the value of the network by setting all the indicators and edge weights from the root to be 1. 
	\item 	If the root is a product node then there is only 1 subgraph which is the graph itself. Again, this equals to the value of $\mathcal{S}$ by setting all indicators to be 1. 
\end{enumerate}
Assume the theorem is true for SPNs with height $\leq h$. Consider an SPN $\mathcal{S}$ with height $h+1$. Again, depending on the type of the root node, we need to discuss two cases:
\begin{enumerate}
	\item 	If the root is a sum node with $K$ children, where the $k$th sub-SPN has $f_{\mathcal{S}_k}(\mathbf{1}|\mathbf{1})$ unique induced trees, then by Def.~\ref{def:treespn} the total number of unique induced trees of $\mathcal{S}$ is $\sum_{k=1}^K f_{\mathcal{S}_k}(\mathbf{1}|\mathbf{1}) = \sum_{k=1}^K 1\cdot f_{\mathcal{S}_k}(\mathbf{1}|\mathbf{1}) = f_{\mathcal{S}}(\mathbf{1}|\mathbf{1})$.
	\item 	If the root is a product node with $K$ children, then the total number of unique induced trees of $\mathcal{S}$ can then be computed by $\prod_{k=1}^K f_{\mathcal{S}_k}(\mathbf{1}|\mathbf{1}) = f_{\mathcal{S}}(\mathbf{1}|\mathbf{1})$.
\end{enumerate}
The second part of the theorem follows by using distributive law between multiplication and addition to combine unique trees that share the same prefix in bottom-up order. 
\end{proof}

\section{MLE as Signomial Programming}
\prop*
\begin{proof}
Using the definition of $\Pr(\mathbf{x}|\mathbf{w})$ and Corollary~\ref{thm:mle}, let $\tau = f_\mathcal{S}(\mathbf{1}|\mathbf{1})$, the MLE problem can be rewritten as
\begin{equation}
\label{equ:primal}
\begin{aligned}
& \text{maximize}_{\mathbf{w}} && \frac{f_{\mathcal{S}}(\mathbf{x}|\mathbf{w})}{f_{\mathcal{S}}(\mathbf{1}|\mathbf{w})} = \frac{\sum_{t = 1}^\tau \prod_{n=1}^N\mathbb{I}_{x_n}^{(t)}\prod_{d=1}^Dw_d^{\mathbb{I}_{w_d\in\mathcal{T}_t}}}
{\sum_{t = 1}^\tau \prod_{d=1}^Dw_d^{\mathbb{I}_{w_d\in\mathcal{T}_t}}} \\
& \text{subject to} && \mathbf{w}\in\RR_{++}^D
\end{aligned}
\end{equation}
which we claim is equivalent to:
\begin{equation}
\label{equ:sp}
\small
\begin{aligned}
& \text{minimize}_{\mathbf{w}, z} && -z \\
& \text{subject to} &&  
\sum_{t = 1}^\tau z\prod_{d=1}^Dw_d^{\mathbb{I}_{w_d\in\mathcal{T}_t}}
 - 
 \sum_{l = 1}^\tau \prod_{n=1}^N\mathbb{I}_{x_n}^{(t)}\prod_{d=1}^Dw_d^{\mathbb{I}_{w_d\in\mathcal{T}_t}}
 \leq 0 \\
& && \mathbf{w}\in\RR_{++}^D, z > 0
\end{aligned}
\end{equation}
It is easy to check that both the objective function and constraint function in~(\ref{equ:sp}) are signomials. To see the equivalence of (\ref{equ:primal}) and (\ref{equ:sp}), let $p^*$ be the optimal value of (\ref{equ:primal}) achieved at $\mathbf{w}^*$. Choose $z = p^*$ and $\mathbf{w} = \mathbf{w}^*$ in (\ref{equ:sp}), then $-z$ is also the optimal solution of (\ref{equ:sp}) otherwise we can find feasible $(z', \mathbf{w}')$ in (\ref{equ:sp}) which has $-z' < -z \Leftrightarrow z' > z$. Combined with the constraint function in (\ref{equ:sp}), we have $p^* = z < z' \leq \frac{f_{\mathcal{S}}(\mathbf{x}|\mathbf{w}')}{f_{\mathcal{S}}(\mathbf{1}|\mathbf{w}')}$, which contradicts the optimality of $p^*$. In the other direction, let $z^*, \mathbf{w}^*$ be the solution that achieves optimal value of (\ref{equ:sp}), then we claim that $z^*$ is also the optimal value of (\ref{equ:primal}), otherwise there exists a feasible $\mathbf{w}$ in (\ref{equ:primal}) such that $z\triangleq\frac{f_{\mathcal{S}}(\mathbf{x}|\mathbf{w})}{f_\mathcal{S}(\mathbf{1}|\mathbf{w})} > z^*$. Since $(\mathbf{w}, z)$ is also feasible in (\ref{equ:sp}) with $-z < -z^*$, this contradicts the optimality of $z^*$.
\end{proof}
The transformation from (\ref{equ:primal}) to (\ref{equ:sp}) does not make the problem any easier to solve.  Rather, it destroys the structure of (\ref{equ:primal}), i.e., the objective function of (\ref{equ:primal}) is the ratio of two posynomials. However, the equivalent transformation does reveal some insights about the intrinsic complexity of the optimization problem, which indicates that it is hard to solve (\ref{equ:primal}) efficiently with the guarantee of achieving a globally optimal solution.

\section{Convergence of CCCP for SPNs}
We discussed before that the sequence of function values $\{f(\mathbf{y}^{(k)})\}$ converges to a limiting point. However, this fact alone does not necessarily indicate that $\{f(\mathbf{y}^{(k)})\}$ converges to $f(\mathbf{y}^*)$ where $\mathbf{y}^*$ is a stationary point of $f(\cdot)$ nor does it imply that the sequence $\{\mathbf{y}^{(k)}\}$ converges as $k\rightarrow\infty$. \emph{Zangwill's global convergence theory}~\citep{zangwill1969nonlinear} has been successfully applied to study the convergence properties of many iterative algorithms frequently used in machine learning, including EM~\citep{wu1983convergence}, generalized alternating minimization~\citep{gunawardana2005convergence} and also CCCP~\citep{lanckriet2009convergence}. Here we also apply Zangwill's theory and combine the analysis from \cite{lanckriet2009convergence} to show the following theorem:
\thmb*
\begin{proof}
We will use Zangwill's global convergence theory for iterative algorithms~\citep{zangwill1969nonlinear} to show the convergence in our case. Before showing the proof, we need to first introduce the notion of ``point-to-set mapping'', where the output of the mapping is defined to be a set. More formally, a point-to-set map $\Phi$ from a set $\mathcal{X}$ to $\mathcal{Y}$ is defined as $\Phi:\mathcal{X}\mapsto\mathcal{P}(\mathcal{Y})$, where $\mathcal{P}(\mathcal{Y})$ is the power set of $\mathcal{Y}$. Suppose $\mathcal{X}$ and $\mathcal{Y}$ are equipped with the norm $||\cdot||_\mathcal{X}$ and $||\cdot||_\mathcal{Y}$, respectively. A point-to-set map $\Phi$ is said to be \emph{closed} at $x^*\in\mathcal{X}$ if $x_k\in\mathcal{X}$, $\{x_k\}_{k=1}^\infty\rightarrow x^*$ and $y_k\in\mathcal{Y}, \{y_k\}_{k=1}^\infty\rightarrow y^*, y_k\in\Phi(x_k)$ imply that $y^*\in\Phi(x^*)$. A point-to-set map $\Phi$ is said to be closed on $S\subseteq\mathcal{X}$ if $\Phi$ is closed at every point in $S$. The concept of \emph{closedness} in the point-to-set map setting reduces to \emph{continuity} if we restrict that the output of $\Phi$ to be a set of singleton for every possible input, i.e., when $\Phi$ is a point-to-point mapping. 
\begin{theorem}[Global Convergence Theorem~\citep{zangwill1969nonlinear}]
Let the sequence $\{x_k\}_{k=1}^\infty$ be generated by $x_{k+1}\in\Phi(x_k)$, where $\Phi(\cdot)$ is a point-to-set map from $\mathcal{X}$ to $\mathcal{X}$. Let a solution set $\Gamma\subseteq\mathcal{X}$ be given, and suppose that:
\begin{enumerate}
	\item 	all points $x_k$ are contained in a compact set $S\subseteq \mathcal{X}$.
	\item 	$\Phi$ is closed over the complement of $\Gamma$.
	\item 	there is a continuous function $\alpha$ on $\mathcal{X}$ such that:
	\begin{enumerate}
		\item 	if $x\not\in\Gamma$, $\alpha(x') > \alpha(x)$ for $\forall x'\in\Phi(x)$.
		\item 	if $x\in\Gamma, \alpha(x')\geq \alpha(x)$ for $\forall x'\in\Phi(x)$.
	\end{enumerate}	 
\end{enumerate}
Then all the limit points of $\{x_k\}_{k=1}^\infty$ are in the solution set $\Gamma$ and $\alpha(x_k)$ converges monotonically to $\alpha(x^*)$ for some $x^*\in\Gamma$.
\end{theorem}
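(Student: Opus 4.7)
The plan is to apply Zangwill's global convergence theorem (already stated in the excerpt) with the point-to-set map $\Phi$ defined by the CCCP update in Eq.~\ref{equ:update}, the merit function $\alpha \defeq f$, and the solution set $\Gamma$ equal to the stationary points of the DCP in (\ref{equ:cccp}). Since Eq.~\ref{equ:update} naturally produces locally normalized weights, it is cleanest to work throughout in the equivalent normal-SPN parameterization of \cite{peharz2015theoretical,zhao2015spnbn}, where the outgoing weights at each sum node lie in a probability simplex, and to use the change of variables $w_d = \exp(y_d)$ only to interpret $\Phi$ as acting on $\mathbf{y}$.

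The first step is to identify the fixed points of $\Phi$ with the elements of $\Gamma$. A fixed point $\mathbf{y}^\star$ of $\Phi$ is, by construction, a maximizer of the concave surrogate $\hat{f}(\cdot, \mathbf{y}^\star)$, so first-order optimality gives $\nabla_\mathbf{y}\hat{f}(\mathbf{y}^\star, \mathbf{y}^\star)=\nabla f_1(\mathbf{y}^\star)+\nabla f_2(\mathbf{y}^\star)=\nabla f(\mathbf{y}^\star)=0$, i.e.\ $\mathbf{y}^\star\in\Gamma$; the converse is immediate from the same identity. The second step is the monotone ascent property, which is exactly the CCCP inequality I can read off the definition of $\hat{f}$: for $\mathbf{y}'\in\Phi(\mathbf{y})$,
\begin{equation*}
f(\mathbf{y}') \;\ge\; \hat{f}(\mathbf{y}',\mathbf{y}) \;\ge\; \hat{f}(\mathbf{y},\mathbf{y}) \;=\; f(\mathbf{y}),
\end{equation*}
where the first inequality uses $\hat{f}(\cdot,\mathbf{y})\le f(\cdot)$ stated before Eq.~\ref{equ:surrogate}, and the middle inequality is strict whenever $\mathbf{y}\notin\Gamma$ because then $\mathbf{y}$ is not a maximizer of $\hat{f}(\cdot,\mathbf{y})$. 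This gives Zangwill's hypothesis 3.

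The remaining steps verify the other two hypotheses. For compactness (hypothesis 1), I would argue that in the normalized parameterization the iterates live inside the compact product of simplices $\prod_{i}\Delta_{|Ch(v_i)|-1}$, using Eq.~\ref{eq:normalize} to see that each CCCP step preserves this set. For closedness of $\Phi$ on the complement of $\Gamma$ (hypothesis 2), I would note that the right-hand side of Eq.~\ref{equ:update} is a continuous function of $\mathbf{w}^{(k-1)}$: the quantities $f_{v_j}(\mathbf{x}|\mathbf{w})$ and $\partial f_\mathcal{S}(\mathbf{x}|\mathbf{w})/\partial f_{v_i}(\mathbf{x}|\mathbf{w})$ are polynomials in the weights, and the normalization in Eq.~\ref{eq:normalize} is continuous wherever the denominator is positive; hence $\Phi$ is actually a point-to-point continuous map at every interior point and is therefore closed there. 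Finally, plugging hypotheses 1--3 into Zangwill's theorem yields simultaneously that every limit point of $\{\mathbf{w}^{(k)}\}$ is a stationary point of (\ref{equ:cccp}) and that $f(\mathbf{y}^{(k)})$ converges monotonically to $f(\mathbf{y}^\star)$ for some $\mathbf{y}^\star\in\Gamma$, which are the two conclusions of the theorem.

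The main obstacle I foresee is boundary behavior: a multiplicative update can in principle drive some $w_{ij}^{(k)}$ toward $0$, so the corresponding $y_{ij}^{(k)}=\log w_{ij}^{(k)}$ escapes every compact subset of $\mathbb{R}^D$, and the closedness argument for $\Phi$ breaks down where the denominator in Eq.~\ref{eq:normalize} vanishes. I would handle this by exploiting the fact that Eq.~\ref{equ:update} is strictly multiplicative, so positivity is preserved along the entire sequence from any positive initialization; combined with the compactness of the normalized simplex and the monotone bounded sequence $\{f(\mathbf{y}^{(k)})\}$ (bounded above by $0$ since $f$ is a log-likelihood), any convergent subsequence of $\{\mathbf{w}^{(k)}\}$ has a limit in the closed simplex, and the strict-ascent property rules out accumulation at boundary points that fail to be stationary. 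This restricts the relevant dynamics to a compact subset of the open simplex on which $\Phi$ is genuinely continuous, legitimizing the application of Zangwill's theorem and completing the proof.
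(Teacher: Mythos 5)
The statement you were asked to prove is Zangwill's Global Convergence Theorem itself, which the paper imports from \citep{zangwill1969nonlinear} without proof; your proposal does not prove it either, but instead \emph{applies} it to establish the surrounding result, Theorem~\ref{thm:convergence}. A genuine proof of the quoted statement would require the standard compactness argument (extract a convergent subsequence of $\{x_k\}$, use closedness of $\Phi$ off $\Gamma$ together with the strict-ascent property of $\alpha$ to derive a contradiction at any limit point outside $\Gamma$, then conclude monotone convergence of $\alpha(x_k)$ from boundedness), none of which appears in your write-up. So, read literally against the stated theorem, the proposal is off target.

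Read instead as a proof of Theorem~\ref{thm:convergence}, your argument follows essentially the same route as the paper's: same point-to-set map $\Phi$ built from Eq.~\ref{equ:update}, same merit function $\alpha=f$, same solution set $\Gamma$ of stationary points of (\ref{equ:cccp}), the same verification of the ascent property via $f(\mathbf{y}')\geq\hat{f}(\mathbf{y}',\mathbf{y})\geq\hat{f}(\mathbf{y},\mathbf{y})=f(\mathbf{y})$ with strictness off $\Gamma$, and the same continuity argument for $\Phi$ based on the numerator and denominator of the update being posynomials. The only substantive differences are cosmetic: the paper takes the compact set to be the hypercube $S=[0,1]^D$ rather than the product of simplices, and it relegates the boundary issue to a remark requiring $\mathbf{w}_0\in\operatorname{int}S$ and observing that the multiplicative update preserves strict positivity for all finite $k$. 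Your closing paragraph on boundary behavior is the right concern but is stated more loosely than it should be --- the claim that strict ascent ``rules out accumulation at boundary points that fail to be stationary'' is not by itself enough to confine the dynamics to a fixed compact subset of the open simplex; the paper instead simply asserts that every non-stationary point of $S$ lies in the interior, which is where the real delicacy of this argument sits in both versions.
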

Let $\mathbf{w}\in\RR_{+}^D$. Let $\Phi(\mathbf{w}^{(k-1)}) = \exp( \argmax_{\mathbf{y}}\hat{f}(\mathbf{y}, \mathbf{y}^{(k-1)}))$ and let $\alpha(\mathbf{w}) = f(\log\mathbf{w}) = f(\mathbf{y}) = \log f_\mathcal{S}(\mathbf{x}|\exp(\mathbf{y})) - \log f_\mathcal{S}(\mathbf{1}|\exp(\mathbf{y}))$. Here we use $\mathbf{w}$ and $\mathbf{y}$ interchangeably as $\mathbf{w} = \exp(\mathbf{y})$ or each component is a one-to-one mapping.  Note that since the $\argmax_{\mathbf{y}}\hat{f}(\mathbf{y}, \mathbf{y}^{(k-1)})$ given $\mathbf{y}^{(k-1)}$ is achievable, $\Phi(\cdot)$ is a well defined point-to-set map for $\mathbf{w}\in\RR_{+}^D$. 

Specifically, in our case given $\mathbf{w}^{(k-1)}$, at each iteration of Eq.~\ref{equ:update} we have 
$$w'_{ij} = \frac{w_{ij}f_{v_j}(\mathbf{1}|\mathbf{w})}{\sum_j w_{ij}f_{v_j}(\mathbf{1}|\mathbf{w})}\propto w_{ij}^{(k-1)} \frac{f_{v_j}(\mathbf{x}|\mathbf{w}^{(k-1)})}{f_\mathcal{S}(\mathbf{x}|\mathbf{w}^{(k-1)})}\frac{\partial f_\mathcal{S}(\mathbf{x}|\mathbf{w}^{(k-1)})}{\partial f_{v_i}(\mathbf{x}|\mathbf{w}^{(k-1)})}
$$
i.e., the point-to-set mapping is given by
\begin{align*}
\Phi_{ij}(\mathbf{w}^{(k-1)}) = \frac{w_{ij}^{(k-1)}f_{v_j}(\mathbf{x}|\mathbf{w}^{(k-1)})\frac{\partial f_\mathcal{S}(\mathbf{x}|\mathbf{w}^{(k-1)})}{\partial f_{v_i}(\mathbf{x}|\mathbf{w}^{(k-1)})}}{\sum_{j'} w_{ij'}^{(k-1)}f_{v_{j'}}(\mathbf{x}|\mathbf{w}^{(k-1)})\frac{\partial f_\mathcal{S}(\mathbf{x}|\mathbf{w}^{(k-1)})}{\partial f_{v_i}(\mathbf{x}|\mathbf{w}^{(k-1)})}}
\end{align*}
Let $S = [0,1]^D$, the $D$ dimensional hyper cube. Then the above update formula indicates that $\Phi(\mathbf{w}^{(k-1)})\in S$. Furthermore, if we assume $\mathbf{w}^{(1)}\in S$, which can be obtained by local normalization before any update, we can guarantee that $\{\mathbf{w}_k\}_{k=1}^\infty\subseteq S$, which is a compact set in $\RR_+^D$. 

The solution to $\max_{\mathbf{y}}\hat{f}(\mathbf{y}, \mathbf{y}^{(k-1)})$ is not unique. In fact, there are infinitely many solutions to this nonlinear equations. However, as we define above, $\Phi(\mathbf{w}^{(k-1)})$ returns \emph{one} solution to this convex program in the $D$ dimensional hyper cube. Hence in our case $\Phi(\cdot)$ reduces to a point-to-point map, where the definition of \emph{closedness} of a point-to-set map reduces to the notion of \emph{continuity} of a point-to-point map. Define $\Gamma = \{\mathbf{w}^*~|~\mathbf{w}^* \text{ is a stationary point of }\alpha(\cdot)\}$. Hence we only need to verify the continuity of $\Phi(\mathbf{w})$ when $\mathbf{w}\in S$. To show this, we first characterize the functional form of $\frac{\partial f_\mathcal{S}(\mathbf{x}|\mathbf{w})}{\partial f_{v_i}(\mathbf{x}|\mathbf{w})}$ as it is used inside $\Phi(\cdot)$. We claim that for each node $v_i$, $\frac{\partial f_\mathcal{S}(\mathbf{x}|\mathbf{w})}{\partial f_{v_i}(\mathbf{x}|\mathbf{w})}$ is again, a posynomial function of $\mathbf{w}$. A graphical illustration is given in Fig.~\ref{fig:partial} to explain the process. This can also be derived from the sum rules and product rules used during top-down differentiation. 
\begin{figure}
\centering
	\includegraphics[scale=1.0]{./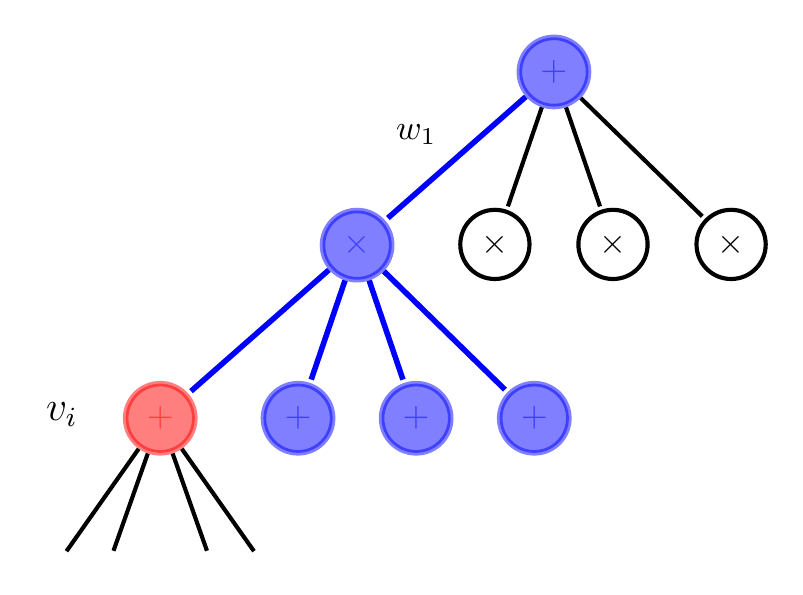}
\caption{Graphical illustration of $\frac{\partial f_\mathcal{S}(\mathbf{x}|\mathbf{w})}{\partial f_{v_i}(\mathbf{x}|\mathbf{w})}$. The partial derivative of $f_\mathcal{S}$ with respect to $f_{v_i}$ (in red) is a posynomial that is a product of edge weights lying on the path from root to $v_i$ and network polynomials from nodes that are children of product nodes on the path (highlighted in blue).}
\label{fig:partial}
\end{figure}
More specifically, if $v_i$ is a product node, let $v_j, j=1,\ldots, J$ be its parents in the network, which are assumed to be sum nodes, the differentiation of $f_\mathcal{S}$ with respect to $f_{v_i}$ is given by
$\frac{\partial f_\mathcal{S}(\mathbf{x}|\mathbf{w})}{\partial f_{v_i}(\mathbf{x}|\mathbf{w})} = \sum_{j=1}^J \frac{\partial f_\mathcal{S}(\mathbf{x}|\mathbf{w})}{\partial f_{v_j}(\mathbf{x}|\mathbf{w})}\frac{\partial f_{v_j}(\mathbf{x}|\mathbf{w})}{\partial f_{v_i}(\mathbf{x}|\mathbf{w})}$. We reach
\begin{equation}
\label{equ:prod}
\frac{\partial f_\mathcal{S}(\mathbf{x}|\mathbf{w})}{\partial f_{v_i}(\mathbf{x}|\mathbf{w})} = \sum_{j=1}^J w_{ij}\frac{\partial f_\mathcal{S}(\mathbf{x}|\mathbf{w})}{\partial f_{v_j}(\mathbf{x}|\mathbf{w})}
\end{equation}
Similarly, if $v_i$ is a sum node and its parents $v_j, j=1,\ldots,J$ are assumed to be product nodes, we have
\begin{equation}
\label{equ:sum}
\small
\frac{\partial f_\mathcal{S}(\mathbf{x}|\mathbf{w})}{\partial f_{v_i}(\mathbf{x}|\mathbf{w})} = \sum_{j=1}^J \frac{\partial f_\mathcal{S}(\mathbf{x}|\mathbf{w})}{\partial f_{v_j}(\mathbf{x}|\mathbf{w})}\frac{f_{v_j}(\mathbf{x}|\mathbf{w})}{f_{v_i}(\mathbf{x}|\mathbf{w})}
\end{equation}
Since $v_j$ is a product node and $v_j$ is a parent of $v_i$, so the last term in Eq.~\ref{equ:sum} can be equivalently expressed as
$$
\frac{f_{v_j}(\mathbf{x}|\mathbf{w})}{f_{v_i}(\mathbf{x}|\mathbf{w})} = \prod_{h\neq i}f_{v_h}(\mathbf{x}|\mathbf{w})
$$
where the index is range from all the children of $v_j$ except $v_i$. Combining the fact that the partial differentiation of $f_\mathcal{S}$ with respect to the root node is 1 and that each $f_v$ is a posynomial function,  it follows by induction in top-down order that 
$\frac{\partial f_\mathcal{S}(\mathbf{x}|\mathbf{w})}{\partial f_{v_i}(\mathbf{x}|\mathbf{w})}$ is also a posynomial function of $\mathbf{w}$.

We have shown that both the numerator and the denominator of $\Phi(\cdot)$ are posynomial functions of $\mathbf{w}$. Because posynomial functions are continuous functions, in order to show that $\Phi(\cdot)$ is also continuous on $S\backslash\Gamma$, we need to guarantee that the denominator is not a degenerate posynomial function, i.e., the denominator of $\Phi(\mathbf{w})\neq 0$ for all possible input vector $\mathbf{x}$. Recall that $\Gamma = \{\mathbf{w}^*~|~\mathbf{w}^*\text{ is a stationary point of }\alpha(\cdot)\}$, hence $\forall \mathbf{w}\in S\backslash\Gamma$, $\mathbf{w}\not\in\text{bd}~S$, where $\text{bd}~S$ is the boundary of the $D$ dimensional hyper cube $S$. Hence we have $\forall \mathbf{w}\in S\backslash\Gamma\Rightarrow \mathbf{w}\in\text{int}~S\Rightarrow\mathbf{w} > 0$ for each component. This immediately leads to $f_v(\mathbf{x}|\mathbf{w}) > 0, \forall v$. As a result, $\Phi(\mathbf{w})$ is continuous on $S\backslash\Gamma$ since it is the ratio of two strictly positive posynomial functions. 

We now verify the third property in Zangwill's global convergence theory. At each iteration of CCCP, we have the following two cases to consider:
\begin{enumerate}
	\item 	If $\mathbf{w}^{(k-1)}\not\in\Gamma$, i.e., $\mathbf{w}^{(k-1)}$ is not a stationary point of $\alpha(\mathbf{w})$, then $\mathbf{y}^{(k-1)}\not\in\argmax_{\mathbf{y}}\hat{f}(\mathbf{y}, \mathbf{y}^{(k-1)})$, so we have $\alpha(\mathbf{w}^{(k)}) = f(\mathbf{y}^{(k)})\geq \hat{f}(\mathbf{y}^{(k)}, \mathbf{y}^{(k-1)}) > \hat{f}(\mathbf{y}^{(k-1)}, \mathbf{y}^{(k-1)}) = f(\mathbf{y}^{(k-1)}) = \alpha(\mathbf{w}^{(k-1)})$.
	\item 	If $\mathbf{w}^{(k-1)}\in\Gamma$, i.e., $\mathbf{w}^{(k-1)}$ is a stationary point of $\alpha(\mathbf{w})$, then $\mathbf{y}^{(k-1)}\in\argmax_{\mathbf{y}}\hat{f}(\mathbf{y}, \mathbf{y}^{(k-1)})$, so we have  $\alpha(\mathbf{w}^{(k)}) = f(\mathbf{y}^{(k)})\geq \hat{f}(\mathbf{y}^{(k)}, \mathbf{y}^{(k-1)}) = \hat{f}(\mathbf{y}^{(k-1)}, \mathbf{y}^{(k-1)}) = f(\mathbf{y}^{(k-1)}) = \alpha(\mathbf{w}^{(k-1)})$.
\end{enumerate}

By Zangwill's global convergence theory, we now conclude that all the limit points of $\{\mathbf{w}_k\}_{k=1}^\infty$ are in $\Gamma$ and $\alpha(\mathbf{w}_k)$ converges monotonically to $\alpha(\mathbf{w}^*)$ for some stationary point $\mathbf{w}^*\in\Gamma$.
\end{proof}
\begin{remark}
Technically we need to choose $\mathbf{w}_0\in \text{int }S$ to ensure the continuity of $\Phi(\cdot)$. This initial condition combined with the fact that inside each iteration of CCCP the algorithm only applies positive multiplicative update and renormalization, ensure that after any finite  $k$ steps, $\mathbf{w}_k\in\text{int} S$. Theoretically, only in the limit it is possible that some components of $\mathbf{w}_\infty$ may become 0. However in practice, due to the numerical precision of float numbers on computers, it is possible that after some finite update steps some of the components in $\mathbf{w}_k$ become 0. So in practical implementation we recommend to use a small positive number $\epsilon$ to smooth out such 0 components in $\mathbf{w}_k$ during the iterations of CCCP. Such smoothing may hurt the monotonic property of CCCP, but this can only happens when $\mathbf{w}_k$ is close to $\mathbf{w}^*$ and we can use early stopping to obtain a solution in the interior of $S$.
\end{remark}
\begin{remark}
Thm.~\ref{thm:convergence} only implies that any limiting point of the sequence $\{\mathbf{w}_k\}_{k=1}^\infty (\{\mathbf{y}_k\}_{k=1}^\infty)$ must be a stationary point of the log-likelihood function and $\{f(\mathbf{y})_k\}_{k=1}^\infty$ must converge to some $f(\mathbf{y}^*)$ where $\mathbf{y}^*$ is a stationary point. Thm.~\ref{thm:convergence} does not imply that the sequence $\{\mathbf{w}_k\}_{k=1}^\infty (\{\mathbf{y}_k\}_{k=1}^\infty)$ is guaranteed to converge. \cite{lanckriet2009convergence} studies the convergence property of general CCCP procedure. Under more strong conditions, i.e., the strict concavity of the surrogate function or that $\Phi()$ to be a contraction mapping, it is possible to show that the sequence $\{\mathbf{w}_k\}_{k=1}^\infty (\{\mathbf{y}_k\}_{k=1}^\infty)$ also converges. However, none of such conditions hold in our case. In fact, in general there are infinitely many fixed points of $\Phi(\cdot)$, i.e., the equation $\Phi(\mathbf{w}) = \mathbf{w}$ has infinitely many solutions in $S$. Also, for a fixed value $t$, if $\alpha(\mathbf{w}) = t$ has at least one solution, then there are infinitely many solutions. Such properties of SPNs make it generally very hard to guarantee the convergence of the sequence $\{\mathbf{w}_k\}_{k=1}^\infty (\{\mathbf{y}_k\}_{k=1}^\infty)$. We give a very simple example below to illustrate the hardness in SPNs in Fig.~\ref{fig:counter}. 
\begin{figure}
\centering
	\includegraphics[scale=1.0]{./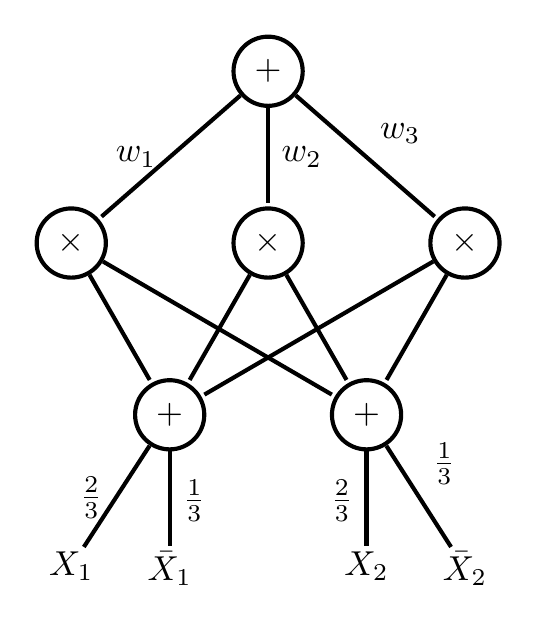}
\caption{A counterexample of SPN over two binary random variables where the weights $w_1, w_2, w_3$ are symmetric and indistinguishable.}
\label{fig:counter}
\end{figure}
Consider applying the CCCP procedure to learn the parameters on the SPN given in Fig.~\ref{fig:counter} with three instances $\{(0, 1), (1, 0), (1, 1)\}$. Then if we choose the initial parameter $\mathbf{w}_0$ such that the weights over the indicator variables are set as shown in Fig.~\ref{fig:counter}, then any assignment of $(w_1, w_2, w_3)$ in the probability simplex will be equally optimal in terms of likelihood on inputs. In this example, there are uncountably infinite equal solutions, which invalidates the finite solution set requirement given in~\citep{lanckriet2009convergence} in order to show the convergence of $\{\mathbf{w}_k\}_{k=1}^\infty$. However, we emphasize that the convergence of the sequence $\{\mathbf{w}_k\}_{k=1}^\infty$ is not as important as the convergence of $\{\alpha(\mathbf{w})_k\}_{k=1}^\infty$ to desired locations on the log-likelihood surface as in practice any $\mathbf{w}^*$ with equally good log-likelihood may suffice for the inference/prediction task. 
\end{remark}

It is worth to point out that the above theorem \emph{does not} imply the convergence of the sequence $\{\mathbf{w}^{(k)}\}_{k=1}^\infty$. Thm.~\ref{thm:convergence} only indicates that all the limiting points of $\{\mathbf{w}^{(k)}\}_{k=1}^\infty$, i.e., the limits of subsequences of $\{\mathbf{w}^{(k)}\}_{k=1}^\infty$, are stationary points of the DCP in (\ref{equ:cccp}). We also present a negative example in Fig.~\ref{fig:counter} that invalidates the application of Zangwill's global convergence theory on the analysis in this case.

The convergence rate of general CCCP is still an open problem~\citep{lanckriet2009convergence}. \cite{salakhutdinov2002convergence} studied the convergence rate of unconstrained bound optimization algorithms with differentiable objective functions, of which our problem is a special case. The conclusion is that depending on the curvature of $f_1$ and $f_2$ (which are functions of the training data), CCCP will exhibit either a quasi-Newton behavior with superlinear convergence or first-order convergence. We show in experiments that CCCP normally exhibits a fast, superlinear convergence rate compared with PGD, EG and SMA. Both CCCP and EM are special cases of a more general framework known as Majorization-Maximization. We show that in the case of SPNs these two algorithms coincide with each other, i.e., they lead to the same update formulas despite the fact that they start from totally different perspectives.

\section{Experiment Details}
\subsection{Methods}
We will briefly review the current approach for training SPNs using projected gradient descent (PGD). Another related approach is to use exponentiated gradient (EG)~\citep{kivinen1997exponentiated} to optimize (\ref{equ:primal}). PGD optimizes the log-likelihood by projecting the intermediate solution back to the positive orthant after each gradient update. Since the constraint in (\ref{equ:primal}) is an open set, we need to manually create a closed set on which the projection operation can be well defined. One feasible choice is to project on to $\RR_{\epsilon}^{D}\triangleq\{\mathbf{w}\in\RR^D_{++}~|~w_d\geq\epsilon, \forall d\}$ where $\epsilon > 0$ is assumed to be very small. To avoid the projection, one direct solution is to use the exponentiated gradient (EG) method\citep{kivinen1997exponentiated}, which was first applied in an online setting and latter successfully extended to batch settings when training with convex models. EG admits a multiplicative update at each iteration and hence avoids the need for projection in PGD. However, EG is mostly applied in convex setting and it is not clear whether the convergence guarantee still holds or not in nonconvex setting. 

\subsection{Experimental Setup}
The sizes of different SPNs produced by LearnSPN and ID-SPN are shown in Table~\ref{table:sizes}.
\begin{table}[htb]
\centering
\caption{Sizes of SPNs produced by LearnSPN and ID-SPN.}
\label{table:sizes}
\begin{tabular}{|l||r|r|}\hline
\textbf{Data set} & LearnSPN & ID-SPN \\\hline
NLTCS & 13,733 & 24,690 \\
MSNBC & 54,839 & 579,364 \\
KDD 2k & 48,279 & 1,286,657 \\
Plants & 132,959 & 2,063,708 \\
Audio & 739,525 & 2,643,948\\
Jester & 314,013 & 4,225,471 \\
Netflix & 161,655 & 7,958,088 \\
Accidents & 204,501 & 2,273,186\\
Retail & 56,931 & 60,961 \\
Pumsb-star & 140,339 & 1,751,092\\
DNA & 108,021 & 3,228,616\\
Kosarak &  203,321 & 1,272,981\\
MSWeb & 68,853 & 1,886,777\\
Book & 190,625 & 1,445,501\\
EachMovie & 522,753 & 2,440,864\\
WebKB & 1,439,751 & 2,605,141 \\
Reuters-52 & 2,210,325 & 4,563,861 \\
20 Newsgrp & 14,561,965 & 3,485,029\\
BBC & 1,879,921 & 2,426,602\\
Ad & 4,133,421 & 2,087,253\\\hline
\end{tabular}
\end{table}

We list here the detailed statistics of the 20 data sets used in the experiments in Table~\ref{table:statistics}.
\begin{table*}[htb]
\centering
\caption{Statistics of data sets and models. $N$ is the number of variables modeled by the network, $|\mathcal{S}|$ is the size of the network and $D$ is the number of parameters to be estimated in the network. $N \times V/D$ means the ratio of training instances times the number of variables to the number parameters.}
\label{table:statistics}
\begin{tabular}{|l||r|r|r|r|r|r|r|}\hline
\textbf{Data set} & $N$ & $|\mathcal{S}|$ & $D$ & Train & Valid & Test & $N\times V/D$ \\\hline
NLTCS & 16 & 13,733 & 1,716 & 16,181 & 2,157 & 3,236 & 150.871 \\ 
MSNBC & 17 & 54,839 & 24,452 & 291,326 & 38,843 & 58,265 & 202.541\\
KDD 2k & 64 & 48,279 & 14,292 & 180,092 & 19,907 & 34,955 & 806.457\\
Plants & 69 & 132,959 & 58,853 &17,412 & 2,321 & 3,482 & 20.414\\
Audio & 100 & 739,525 & 196,103 & 15,000 & 2,000 & 3,000 & 7.649\\
Jester & 100 & 314,013 & 180,750 & 9,000 & 1,000 & 4,116 & 4.979 \\
Netflix & 100 & 161,655 & 51,601 & 15,000 & 2,000 & 3,000 & 29.069\\
Accidents & 111 & 204,501 & 74,804 & 12,758 & 1,700 & 2,551 & 18.931\\
Retail & 135 & 56,931 & 22,113 & 22,041 & 2,938 & 4,408 & 134.560\\
Pumsb-star & 163 & 140,339 & 63,173 & 12,262 & 1,635 & 2,452 & 31.638\\
DNA & 180 & 108,021 & 52,121 & 1,600 & 400 & 1,186 & 5.526\\
Kosarak & 190 & 203,321 & 53,204 & 33,375 & 4,450 & 6,675 & 119.187\\
MSWeb & 294 & 68,853 & 20,346 & 29,441 & 3,270 & 5,000 & 425.423 \\
Book & 500 & 190,625 & 41,122 & 8,700 & 1,159 & 1,739 & 105.783\\
EachMovie & 500 & 522,753 & 188,387 & 4,524 & 1,002 & 591 & 12.007\\
WebKB & 839 & 1,439,751 & 879,893 & 2,803 & 558 & 838 & 2.673\\
Reuters-52 & 889 & 2,210,325 & 1,453,390 & 6,532 & 1,028 & 1,540 & 3.995\\
20 Newsgrp & 910 & 14,561,965 & 8,295,407 & 11,293 & 3,764 & 3,764 & 1.239 \\
BBC & 1058 & 1,879,921 & 1,222,536 & 1,670 & 225 & 330 & 1.445\\
Ad & 1556 & 4,133,421 & 1,380,676 & 2,461 & 327 & 491 & 2.774 \\\hline
\end{tabular}
\end{table*}
Table~\ref{table:time} shows the detailed running time of PGD, EG, SMA and CCCP on 20 data sets, measured in seconds.
\begin{table*}[htb]
\centering
\caption{Running time of 4 algorithms on 20 data sets, measured in seconds.}
\label{table:time}
\begin{tabular}{|l|r|r|r|r|}\hline
\textbf{Data set} & PGD & EG & SMA & CCCP\\\hline
NLTCS & 438.35 & 718.98 & 458.99 & 206.10 \\
MSNBC & 2720.73 & 2917.72 & 8078.41 & 2008.07 \\
KDD 2k & 46388.60 & 22154.10 & 27101.50 & 29541.20 \\
Plants & 12595.60 & 10752.10 & 7604.09 & 13049.80 \\
Audio & 19647.90 & 3430.69 & 12801.70 & 14307.30 \\
Jester & 6099.44 & 6272.80 &  4082.65 & 1931.41 \\
Netflix & 29573.10 & 27931.50 & 15080.50 &  8400.20 \\
Accidents & 14266.50 & 3431.82 & 5776.00 & 20345.90 \\
Retail & 28669.50 & 7729.89 & 9866.94 & 5200.20 \\
Pumsb-star & 3115.58 & 13872.80 & 4864.72 & 2377.54 \\
DNA & 599.93 & 199.63 & 727.56 & 1380.36 \\
Kosarak & 122204.00 & 112273.00 & 49120.50 & 42809.30 \\
MSWeb & 136524.00 & 13478.10 & 65221.20 & 45132.30 \\
Book & 190398.00 & 6487.84 & 69730.50 & 23076.40 \\
EachMovie & 30071.60 & 32793.60 & 17751.10 & 60184.00 \\
WebKB & 123088.00 & 50290.90 & 44004.50 & 168142.00 \\
Reuters-52 & 13092.10 & 5438.35 & 20603.70 & 1194.31 \\
20 Newsgrp & 151243.00 & 96025.80 & 173921.00 & 11031.80 \\
BBC & 20920.60 & 18065.00 & 36952.20 & 3440.37 \\
Ad & 12246.40 & 2183.08 & 12346.70 & 731.48\\\hline
\end{tabular}
\end{table*}

\end{appendices}

\end{document}